\documentclass[journal]{IEEEtran}

\usepackage{url}
\usepackage{enumerate}
\usepackage{subfigure}
\usepackage{amsfonts,mathrsfs}
\usepackage{amssymb,amsmath,amsthm}
\usepackage{verbatim}
\usepackage{acronym}
\usepackage{algpseudocode,algorithm}
\usepackage{mathtools}
\usepackage{cite}
\usepackage{xcolor}

\usepackage{amssymb,amsmath,amsthm,enumitem}

\newtheorem{theorem}{Theorem}

\newtheorem{definition}{Definition}

\newtheorem{lemma}{Lemma}

\newtheorem{proposition}[theorem]{Proposition}
\newtheorem{remark}{Remark}

\ifCLASSINFOpdf
 
\else
  
\fi

\hyphenation{op-tical net-works semi-conduc-tor}

\begin{document}

\title{Optimal Adversarial Policies in the Multiplicative Learning System with a Malicious Expert}

\author{S. Rasoul Etesami, Negar Kiyavash, Vincent Leon, and H. Vincent Poor\thanks{S. Rasoul Etesami and Vincent Leon are with Department of Industrial and Enterprise Systems Engineering, University of Illinois at Urbana-Champaign, Urbana, IL, 61801 email: (etesami1,jliang38)@illinois.edu.} \thanks{Negar Kiyavash is with College of Management of Technology, Ecole polytechnique fédérale de Lausanne (EPFL), Lausanne, Switzerland (email: negar.kiyavash@epfl.ch).}\thanks{H. Vincent Poor is with Department of Electrical Engineering, Princeton University, Princeton, NJ, 08540 (email: poor@princeton.edu).}}

\maketitle

\begin{abstract}
We consider a learning system based on the conventional multiplicative weight (MW) rule that combines experts' advice to predict a sequence of true outcomes. It is assumed that one of the experts is malicious and aims to impose the maximum loss on the system. The loss of the system is naturally defined to be the aggregate absolute difference between the sequence of predicted outcomes and the true outcomes. We consider this problem under both offline and online settings. In the offline setting where the malicious expert must choose its entire sequence of decisions a priori, we show somewhat surprisingly that a simple greedy policy of always reporting false prediction is asymptotically optimal with an approximation ratio of $1+O(\sqrt{\frac{\ln N}{N}})$, where $N$ is the total number of prediction stages. In particular, we describe a policy that closely resembles the structure of the optimal offline policy. For the online setting where the malicious expert can adaptively make its decisions, we show that the optimal online policy can be efficiently computed by solving a dynamic program in $O(N^3)$. Our results provide a new direction for vulnerability assessment of commonly-used learning algorithms to adversarial attacks where the threat is an integral part of the system.
\end{abstract}

%\begin{IEEEkeywords}
%Adversarial learning; expert advice; Markov decision process; dynamic
%programming; approximation ratio. 
%\end{IEEEkeywords}

\IEEEpeerreviewmaketitle

\section{Introduction}
 
The focus of the vast literature on learning with expert advice is coming up with good prediction rules for the learning system even for the worst possible outcome sequence \cite{vovk1990aggregating,kleinberg2010regret,cesa1997use,haussler1995tight,koolen2013pareto,yu2009dsybil}. However, the proposed algorithms are not designed to be robust against malicious strategic experts. Given the prevalence of machine learning algorithms and as a result, automated decision making in distributed settings in many real-world applications, the effect of malicious experts whose goal is to destroy the performance of the system by injecting false predictions cannot be ignored. In this paper, we address this issue by analyzing the performance of the \emph{multiplicative weighted} (MW) learning algorithm \cite{cesa1997use}, widely used in collaborative filtering, in the presence of malicious experts injecting false recommendations. 

There are many motivating examples for considering the effect of malicious experts in real-world learning systems. To name a few, one can consider movie recommendation systems such as IMDB or Netflix where the system relies on the users' feedback (experts) to rate the quality of the movies. However, the users do not always report the true ratings due to various reasons such as manipulating the outcome of the system toward their preferences \cite{etesami2018influence,yu2009dsybil}. As another example, one can consider sensor fusion in networks where a malicious sensor can attempt to attack the system by injecting false signals and cause the central decision-maker to reach incorrect decisions \cite{truong2018optimal}. Moreover, almost all cases of collaborative filtering or distributed decision making are vulnerable to such internal threats.

In this paper, we study the performance of the MW learning algorithm against adversarial attacks where the adversary's goal is to attack the system without having control over the system's prediction rule. The MW update rule is one of the most commonly used schemes for learning from expert advice \cite{littlestone1994weighted,cesa2006prediction,vovk1990aggregating}, in which after each stage of prediction, when the true outcome is revealed, depending on whether the experts were correct or wrong on that stage, the system punishes or rewards the experts, respectively, by decreasing or increasing their relative weights by a multiplicative factor. Thus, learning with expert advice can be modeled in a multistage sequential decision-making framework where at each stage, the recommendation system combines the predictions of a set of experts about an unknown outcome with the aim of accurately predicting that outcome. 

The problem that we consider here was originally proposed in \cite{truong2013optimal} and subsequently studied in \cite{truong2018optimal}, where it was shown that in the case of \emph{logarithmic} loss function the optimal online policy for the malicious expert is a simple greedy policy. This, however, is not quite surprising as the malicious expert's gain by reporting false predictions substantially dominates his credibility loss due to the logarithmic nature of the loss function. As it was shown using numerical analysis in \cite{truong2013optimal,truong2018optimal}, characterizing the optimal policy for the absolute loss function (which is the more interesting case and commonly used in MW learning systems) is much more challenging due to the strong coupling between the gain in reporting false prediction and the loss in credibility. In this work, we answer this question by showing that the same simple greedy algorithm is \emph{asymptotically} optimal in the offline setting. Moreover, we show that although the optimal online policy can have a complicated structure, it can still be computed efficiently using a reduced-size dynamic program.

The problem that we study in this paper also belongs to the general family of many problems such as target tracking, distributed detection under the byzantine attacks, Sybil attack, and causative attack from the taxonomy of adversarial machine learning where the attacker can modify the data in the training or during the operation in order to degrade the performance of a machine learning algorithm \cite{huang2011adversarial,tran2009sybil,newsome2006paragraph,douceur2002sybil}. Our work is also related to \cite{cover1965behavior,gravin2016towards,gravin2017tight} in which a learner plays against an adversary such that at each step the learner has to choose an expert from a pool of experts to follow while the adversary adaptively sets the gains for the experts with the aim of maximizing the overall regret incurred by the learner. The authors in \cite{gravin2016towards} fully characterize the optimal
online policies for the learner and the adversary in the case of 2 and 3 experts and provide some general insights
into how to design an optimal algorithm for the learner and the adversary for an arbitrary number of experts. However, our work is different from those in the sense that the experts in our setting are themselves malicious and can act strategically. Moreover, the performance guarantee in our setting is in terms of the approximation factor rather than the conventional notion of \emph{regret}.%\footnote{The regret of an algorithm is typically defined to be the difference between payoffs of the best-in-hindsight action and the expected payoff obtained by following the algorithm choices.}

We consider the problem of learning with a malicious strategic expert under both \emph{offline} and \emph{online} settings. More specifically, we consider a system with two experts; one honest and the other malicious. At each round, the honest expert predicts the true outcome with some accuracy, while the malicious expert strategically provides a prediction with the goal of maximizing the loss incurred by the system. We assume that the adversary knows the true outcome and prediction rule of the learning system. For the offline setting, we assume that the adversary reports his entire sequence of predictions at the beginning of the horizon, while for the online setting, the adversary is allowed to look at the past information up to the current stage and then reports his next prediction. The problem that we address in this paper is two-fold: From the malicious expert's point of view, we are interested in knowing the optimal policy which imposes the maximum loss on the learning system, while from the system designer's point of view we are interested in knowing how the widely-applied MW learning algorithm performs in the presence of a malicious expert.

As one of our main results we show that for the case of absolute loss function, the optimal offline policy can be approximated within a factor $1+O(\sqrt{\frac{\ln N}{N}})$ of the one which reports false predictions at all the stages, where $N$ is the total number of prediction stages. This can be viewed as a counterpart for the conventional regret minimization bounds obtained for the MW update rule. Here it is worth noting that obtaining such an approximation ratio is more challenging than obtaining regret bounds commonly used in expert advice settings. This is because the space of feasible policies is exponentially larger than the set of feasible actions. Therefore, for the offline setting, we approximate the offline \emph{policy} rather than the \emph{action}. One implication of our analysis under the offline setting is that the commonly used MW learning algorithm is not robust with respect to adversarial attacks as a naive malicious expert can impose almost the same loss as an optimally strategic malicious expert. We then extend our results to the online setting and formulate the optimal policy of an online adaptive adversary using a dynamic program (DP). In particular, we show that the number of states of this dynamic program grows only linearly in terms of the number of stages which allows us to compute the optimal online policy for the malicious expert efficiently in $O(N^3)$. %Finally, we discuss the extensions of our model to multi-expert settings and under different learning systems.  

The paper is organized as follows: In Section \ref{sec:model}, we introduce the model formally and discuss some of its salient properties. In Section \ref{eq:main-approximation}, we provide our main results for the case of offline malicious expert and absolute loss function. In Section \ref{sec:generalization}, we provide an efficient algorithm for computing the optimal online policy for the case of two experts, with an extension for the case of multiple honest experts. Simulation results for offline and online adversaries are provided in Section \ref{sec:simulation}. We conclude the paper by identifying some future directions of research in Section \ref{sec:conclusion}. %For the ease of presentation, we relegate the proof of auxiliary lemmas to Appendix A. 

\section{Problem Formulation}\label{sec:model}

In this section, we first introduce the mathematical model formally as in \cite{truong2013optimal}, and then provide some of its salient properties which will be used in our later analysis. In the remainder of this paper, we shall refer to the ill-intent expert as a malicious expert or an adversary, interchangeably.

Consider a learning system with two experts. At each round $k=0,1,2,\ldots$, expert $i\in\{1,2\}$ has a nonnegative weight denoted by $p^i_k\in[0,1]$. We assume that both experts start with equal initial weight $p^1_0=p^2_0=1$. We denote the prediction of the $i$th expert at stage $k$ by $x^i_k\in\{0,1\}$, and the true outcome by $y_k\in\{0,1\}$. At stage $k$, the relative weight of expert $i\in\{1,2\}$ is defined to be 
\begin{align}\label{eq:reletive-weight}
\tilde{p}^i_k:=\frac{p^i_k}{p^1_k+p^2_k}.
\end{align}
In the $k$th stage, the learning system predicts the true outcome $y_k$ using a weighted average rule given by 
\begin{align}\label{eq:estimate-y-hat}
\hat{y}_k=\tilde{p}^1_kx^1_k+\tilde{p}^2_kx^2_k,
\end{align}
and updates the experts' weights in the next time step depending on whether they were correct or wrong in the previous instance using the following multiplicative weight (MW) update rule:
\begin{align}\label{eq:update-rule-original}
p^i_{k+1}&= \begin{cases} p^i_{k}\epsilon & \mbox{if } \  x^i_k\neq y_k, \\
p^i_{k}  & \mbox{if} \  \ x^i_k=y_k.\end{cases}
\end{align}
Here $\epsilon\in (0,1)$ is a fixed constant parameter set the learning system and reflects its aggressiveness on punishing/rewarding the experts. We note that the MW update rule \eqref{eq:update-rule-original} has been extensively used in the past literature  \cite{cesa1997use,cesa2006prediction,auer2002bandit,lattimore2020introbook}. In particular, the MW learning system serves as an independent forecaster (executor). Unlike the adversary, the learning system is neither strategic nor has access to the information of the true outcomes: it merely takes the experts' advice and computes the prediction in each round using \eqref{eq:estimate-y-hat}. After the true outcome $y_k$ is revealed, the system incurs a loss $l(\hat{y}_k,y_k)=Q(|\hat{y}_k-y_k|)$, where $Q(\cdot):[0,1]\to \mathbb{R}^{\ge 0}$ can be some general nondecreasing function. In this paper, we shall only focus on the \emph{absolute} loss function $Q(y):=y$, as it is the most common loss function used in the literature for the expert advice setting \cite{cesa2006prediction,cesa1997use}.%\footnote{Other examples of loss functions include square loss $Q(y):=y^2$ and logarithmic loss $Q(y):=-\ln(1-y)$.} 

We assume that expert 2 is the {\bf \textit{honest}} expert who makes a correct prediction with accuracy $\mu$, i.e., the one that agrees with the true outcome with probability $\mu$: 
\begin{align}\nonumber
x^2_{k}&= \begin{cases} y_{k} & \mbox{w.p. } \  \mu, \\
1-y_{k}  & \mbox{w.p.} \  1-\mu.\end{cases}
\end{align}

\begin{remark}
For asymmetric accuracies $\{\mu_k\}_{k\in[N]}$, one can partition the horizon into epochs of a small constant length. As predictions of the honest expert are independent, one may assume that the honest expert's expected accuracy within each epoch is close to its expected value denoted by $\mu$. Therefore, our analysis can be viewed as a constant approximation of the heterogeneous model in the stationary regime.
\end{remark}

Expert 1 is the {\bf \textit{malicious}} expert (adversary) who aims to impose the maximum loss on the system by taking the best adversarial action at each stage. We assume that expert 1 knows the true outcome $y_k$ at time $k\in[N]:=\{1,\ldots, N\}$, as well as the distribution of $x^2_k$, the prediction of expert 2.\footnote{Note that the assumption that the adversary knows the prediction accuracy $\mu$ is not very restrictive as the adversary can always learn this distribution using the empirical history of observed actions taken by the honest expert.} One of our main objectives in this paper is to evaluate the robustness of the MW learning algorithm in the presence of a malicious expert. For that reason, we evaluate the system's performance against the most adverse scenario where the adversary has full information about the sequence of outcomes. We refer to Proposition \ref{prop-no-inf} for a weaker adversary with no information about an arbitrary sequence of true outcomes.

\begin{definition}
A malicious expert is called an {\bf \textit{offline}} adversary if he chooses his entire of sequence of predictions $\{x^{1}_k\}_{k=1}^{N}$ at the beginning of the horizon and then commits to it. A malicious expert is called an {\bf \textit{online}} adversary if the entire history of predictions and true outcomes $\{\tilde{p}^1_{\ell}, x^{1}_{\ell}, x^{2}_{\ell}, y_{\ell}\}_{\ell=1}^{k-1}$ are available to him, and then he decides $x^{1}_k$.%\footnote{In control theory literature, offline and online policies are referred to as open-loop and state-feedback policies, respectively \cite{basar1999dynamic}.}
\end{definition}

Finally, the goal of the malicious expert (either offline or online) is to produce a sequence of predictions $\{x^{1}_k\}_{k=1}^{N}$ over a fixed finite horizon $N$ in order to maximize the expected aggregate loss on the system given by:
\begin{align}\label{eq:value-function}
\mathbb{E}_{x^2_1,\ldots,x^2_N}[\sum_{k=1}^{N}l(\hat{y}_k,y_k)]=\sum_{k=1}^{N}\mathbb{E}_{x^2_1,\ldots,x^2_k}[l(\hat{y}_k,y_k)],
\end{align}where the second expectation is taken over the past and current actions of the honest agent $x^2_1,\ldots,x^2_k$. In particular, an {\bf \textit{optimal policy}} for the offline/online malicious expert is a sequence of decisions which maximizes the objective function \eqref{eq:value-function} with respect to its corresponding information set, i.e., a solution to the maximization problem $\max_{x^1_1,\ldots,x^1_N}\sum_{k=1}^{N}\mathbb{E}_{x^2_1,\ldots,x^2_k}[l(\hat{y}_k,y_k)].$ 

It is worth noting that one of the major differences between the above model and the conventional expert advice problem is that in the latter one assumes that all the experts are honest and simply report their true recommendations. In particular, the goal is to devise a learning scheme which combines the experts' recommendations in an intelligent manner to accurately predict the unknown outcomes, where it can be shown that the well-known MW learning rule achieves the minimum regret bound. However, the above adversarial model can be viewed as a dual to the expert advice problem where the MW rule is fixed as the underlying learning process and the goal to evaluate how well this learning rule will perform in the presence of a malicious expert who strategically aims to maximize the loss of the system.

\subsection{Preliminary Results}
Here, we describe some of the important properties of the aforementioned adversarial model which will be used later to establish our main results. First we note that using the update rule \eqref{eq:update-rule-original} and the definition of relative weights \eqref{eq:reletive-weight}, we have 
\begin{align}\label{eq:update-rule-relative}
\tilde{p}^1_{k+1}&= \begin{cases} \frac{1}{1+\left(\frac{1}{\tilde{p}^1_{k}}-1\right)\frac{1}{\epsilon}} & \mbox{if } \  x^1_k=1-y_k, x^2_k=y_k, \\
\frac{1}{1+\left(\frac{1}{\tilde{p}^1_{k}}-1\right)\epsilon}   & \mbox{if} \  x^1_k=y_k, x^2_k=1-y_k,\\
\tilde{p}^1_{k} & \mbox{if } \  x^1_k=x^2_k,\end{cases}
\end{align}
In particular, from \eqref{eq:update-rule-relative} one can easily see that the adversary's relative weight changes only when his prediction is at odds with the prediction of the honest agent (when both experts predict the same, the adversary's relative weight remains unchanged). As the update rule in \eqref{eq:update-rule-relative} plays an important role in our analysis, we define a weight update function $g:(0, 1]\!\to\! (0, 1]$ and its inverse $g^{(-1)}:(0, 1]\!\to\! (0, 1]$ by
\begin{align}\label{eq:update-functions}
&g(\rho):=\frac{1}{1+\left(\frac{1}{\rho}-1\right)\frac{1}{\epsilon}},\cr  
&g^{(-1)}(\rho):=\frac{1}{1+\left(\frac{1}{\rho}-1\right)\epsilon}.
\end{align}
In fact, both $g(\rho)$ and its inverse $g^{(-1)}(\rho)$ are strictly increasing functions and we have $g(\rho)\leq \rho\leq g^{-1}(\rho), \forall \rho\in(0,1]$. An important feature of the functions $g(\rho)$ and $g^{(-1)}(\rho)$ is that for any integer $j\in \mathbb{Z}^+$, we have
\begin{align}\label{eq:composition-rule}
g^{(j)}(\rho):&=\underbrace{g(\ldots(g(\rho))}_\text{$j$ times}=\frac{1}{1+\left(\frac{1}{\rho}-1\right)\frac{1}{\epsilon^{j}}},\cr
g^{(-j)}(\rho):&=\underbrace{g^{(-1)}(\ldots(g^{(-1)}(\rho))}_\text{$j$ times}=\frac{1}{1+\left(\frac{1}{\rho}-1\right)\epsilon^j},
\end{align}
where $g^{(j)}(\rho)$ and $g^{(-j)}(\rho)$ denote the composition of $g(\rho)$ and $g^{(-1)}(\rho)$ by themselves $j$ times, respectively. In particular, we note that $g^{(0)}(\rho)\equiv\rho$. 

%\begin{figure}[h]
%\color{blue}
%\vspace{-0.5cm}
%\begin{center}
%\includegraphics[width=.4\textwidth]{update_function} 
%\end{center}
%\vspace{-0.5cm}
%\caption{Illustration of functions $g(\rho)$ and its inverse $g^{(-1)}(\rho)$ for $\epsilon=\frac{1}{e}$.}
%\label{fig:gx}
%\end{figure}

\section{Optimal Offline Policy for the Absolute Loss Fuction}\label{eq:main-approximation}

In this section, we analyze the optimal policy for the offline adversary and postpone our analysis for the case of the online adversary to Section \ref{sec:generalization}.  We recall that the offline adversary is the one who chooses his entire sequence of decisions (predictions) at the beginning of the horizon. More precisely, the offline adversary aims to maximize the expected loss of the learning system given by \eqref{eq:value-function} over all the $2^N$ feasible sequences of the form $\{0, 1\}^N$. Note that although the space of feasible solutions is exponentially large, however we are only interested in obtaining polynomial-time computable policies.  Therefore, our goal here is to approximate the optimal offline policy within only a negligible additive error term in the overall objective cost.

Toward this end, we first establish a sequence of lemmas to prove our main approximation result (Theorem \ref{thm:main}). In fact, many of these lemmas do not make any use of the specific structure of the functions $g(\rho)$ and $Q(\cdot)$, and we state them in a more general form. Later, in order to provide more closed-form approximation results, we specialize these lemmas to the specific choice of $g(\rho)$ given in \eqref{eq:update-functions} and linear loss function $Q(y)=y$. It is worth noting that although we assumed that the learning algorithm starts with equal initial weight for both experts (i.e., the initial relative weight of the adversary is $0.5$), however, we state our results for an offline adversary with generic initial relative weight $\rho$. The reason for this choice would become apparent subsequently. Next, we state the following lemma from \cite[Lemma 1]{truong2018optimal} whose proof is by induction on the horizon length $N$.

\begin{lemma}\label{lemm:value}
For a loss function $l(\hat{y},y)=Q(|\hat{y}-y|)$, with $Q:[0,1]\to \mathbb{R}^{\ge 0}$, the expected loss given in \eqref{eq:value-function} is fully determined by the initial relative weight of the adversary $\rho$, his policy $\Psi:=(x^1_1,\ldots,x^{1}_N)\in\{0,1\}^
N$, and the horizon length $N$.
\end{lemma}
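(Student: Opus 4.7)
The plan is induction on the horizon length $N$, leveraging the Markov structure of the weight-update process \eqref{eq:update-rule-relative} together with the independence of the honest expert's random draws $\{x^2_k\}_{k=1}^N$. The argument is essentially bookkeeping, and I do not foresee any substantive obstacle; the hardest part is simply pinning down what "fresh subproblem" means precisely when I invoke the induction hypothesis.

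The key preliminary observation is a single-stage computation. At any round $k$, enumerating the two realizations of $x^2_k$ (which equals $y_k$ with probability $\mu$ and $1-y_k$ otherwise) together with \eqref{eq:estimate-y-hat} shows that $|\hat{y}_k - y_k|$ takes one of the four values $0$, $1-\tilde{p}^1_k$, $\tilde{p}^1_k$, or $1$, with probabilities depending only on $\tilde{p}^1_k$, $x^1_k$, $y_k$, and the fixed parameter $\mu$. Hence the one-stage expected loss $\mathbb{E}[Q(|\hat{y}_k - y_k|)\mid \tilde{p}^1_k, x^1_k, y_k]$ is a function of those four quantities alone. The same enumeration combined with \eqref{eq:update-rule-relative} shows that the conditional distribution of $\tilde{p}^1_{k+1}$ given $(\tilde{p}^1_k, x^1_k, y_k)$ is likewise pinned down.

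Denote the expected total loss starting from initial weight $\rho$ under policy $\Psi = (x^1_1,\ldots,x^1_N)$ by $V_N(\rho, \Psi)$. The base case $N=1$ is immediate from the preliminary observation. For the inductive step, condition on $\tilde{p}^1_2$ and decompose
\[
V_N(\rho, \Psi) \;=\; \mathbb{E}[l(\hat{y}_1, y_1)] \;+\; \mathbb{E}\!\left[\mathbb{E}\!\left[\sum_{k=2}^N l(\hat{y}_k, y_k)\;\Big|\;\tilde{p}^1_2\right]\right].
\]
The first summand depends only on $\rho$ and $x^1_1$ by the single-stage observation. For the inner expectation, the independence of $\{x^2_k\}_{k\ge 2}$ from $x^2_1$ together with the Markov property of the weight update implies that the process from round $2$ onward is an instance of the same model with initial weight $\tilde{p}^1_2$, horizon $N-1$, and policy $(x^1_2,\ldots,x^1_N)$; by the induction hypothesis the inner conditional expectation equals $V_{N-1}(\tilde{p}^1_2, (x^1_2,\ldots,x^1_N))$. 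Since the distribution of $\tilde{p}^1_2$ given $(\rho, x^1_1)$ was shown to be determined, taking the outer expectation yields $V_N(\rho, \Psi)$ as a function of $\rho$, $\Psi$, and $N$ alone, completing the induction.

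The only mild subtlety worth flagging is the clean invocation of the induction hypothesis on the shifted subproblem; this is legitimate precisely because $\mu$ and $\epsilon$ are global constants and the honest expert's randomness refreshes independently at each stage, so the residual process has the same law as a fresh instance of the model started from the random weight $\tilde{p}^1_2$.
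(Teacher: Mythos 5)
Your overall strategy---induction on $N$ with a first-stage/continuation decomposition exploiting the Markov structure of \eqref{eq:update-rule-relative} and the independence of the honest expert's draws---is the same as the paper's. But there is a genuine gap: you never prove the one fact that gives the lemma its content, namely that the expected loss does not depend on the sequence of true outcomes $\{y_k\}$. Your single-stage observation explicitly lists $y_k$ among the quantities determining both the one-stage expected loss and the transition kernel of $\tilde{p}^1_{k+1}$, and you carry $y_k$ through the induction without ever discharging it; as written, the argument only shows that the expected loss is a function of $\rho$, $\Psi$, $N$ \emph{and} $(y_1,\ldots,y_N)$, which is vacuous. The lemma is invoked later precisely for the claim that the value function is independent of the outcome sequence, so this is not a cosmetic omission.

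To close the gap you must do the explicit two-case computation that constitutes the paper's proof: interpret the stage-$k$ decision as ``lie'' ($x^1_k=1-y_k$) or ``truth'' ($x^1_k=y_k$) --- note the lemma would be false if $\Psi$ were read as the raw bit sequence of predictions, since $x^1_1=0$ yields different expected losses for $y_1=0$ and $y_1=1$ --- and then verify via \eqref{eq:estimate-y-hat} that for a lie the expected one-stage loss equals $\mu Q(\rho)+(1-\mu)Q(1)$ whether $y_k=0$ or $y_k=1$ (and equals $\mu Q(0)+(1-\mu)Q(1-\rho)$ for a truth), and that the distribution of $\tilde{p}^1_{k+1}$ likewise depends only on the lie/truth decision (for a lie the new weight is $g(\rho)$ w.p.\ $\mu$ and $\rho$ w.p.\ $1-\mu$, regardless of $y_k$). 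With that symmetry established, your inductive decomposition goes through verbatim.
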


From Lemma \ref{lemm:value} one can see that the adversary can take his optimal actions by only adjusting them relative to the honest expert's actions. Henceforth, the expected loss in \eqref{eq:value-function} for a given policy $\Psi=(x^1_1,x^1_2,\ldots,x^1_n)$ of the offline adversary can be represented by $V^{\Psi}_n(\rho):=\sum_{k=1}^{n}\mathbb{E}_{x^2_1,\ldots,x^2_k}[l(\hat{y}_k,y_k)]$, where $\rho$ denotes the initial relative weight of the adversary.

\begin{definition}\label{def:false-true}
Assume the adversary's initial weight is $\rho$ and the number of stages is $n$. An adversary's policy is called a {\bf \textit{false policy}} if he lies in all the stages, i.e., $x^1_k=1-y_k, \forall k\in[n]$. It is called a {\bf \textit{true policy}} if the adversary tells the truth in all the stages, i.e., $x^1_k=y_k, \forall k\in[n]$. We let $V^{\rm f}_n(\rho)$ and $V^{\rm t}_n(\rho)$ denote the expected loss of the system if the adversary follows the false policy and the true policy, respectively.
\end{definition}

Using the above definition we can obtain closed-form relations for the expected loss of the false/true policies as given in the following lemma. We will use these expressions as black-boxes in our approximation analysis.  
\begin{lemma}\label{lemm:block}
For a loss function $l(\hat{y},y):=Q(|\hat{y}-y|)$, initial adversary's weight $\rho$, and $n$ stages, we have
\begin{align}\nonumber
&V^{\rm f}_n(\rho)=n(1-\mu)Q(1)+\sum_{j=0}^{n}\mathbb{P}(Z>j)Q(g^{(j)}(\rho)),\cr 
&V^{\rm t}_n(\rho)=n\mu Q(0)+\sum_{j=0}^{n}\mathbb{P}(W>j)Q(1-g^{(-j)}(\rho)),
\end{align}where $Z\sim Bin(n,\mu)$ and $W\sim Bin(n,1-\mu)$ are Binomial distributions with parameters $\mu$ and $1-\mu$, respectively.
\end{lemma}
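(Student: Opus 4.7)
The plan is to compute the per-stage loss explicitly under each of the two policies, track the adversary's relative weight as a deterministic function of the honest expert's realizations, and then collapse the resulting sum via a pathwise telescoping argument.

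First, I would work out $|\hat{y}_k - y_k|$ from \eqref{eq:estimate-y-hat}. Under the false policy $x^1_k = 1 - y_k$: when the honest expert is correct ($x^2_k = y_k$, which occurs with probability $\mu$), a direct substitution gives $|\hat{y}_k - y_k| = \tilde{p}^1_k$; when the honest expert is wrong, both experts predict $1-y_k$, so $|\hat{y}_k - y_k| = 1$. Under the true policy $x^1_k = y_k$: when the honest expert is correct both experts predict $y_k$ and the loss is $Q(0)$; when the honest expert is wrong, $|\hat{y}_k - y_k| = 1 - \tilde{p}^1_k$.

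Second, I would use the update rule \eqref{eq:update-rule-relative} together with \eqref{eq:composition-rule} to track $\tilde{p}^1_k$ along a sample path. Under the false policy the adversary's weight shrinks by $g$ on those stages where the honest expert is correct and is unchanged otherwise, so if $S_{k-1}$ denotes the number of correct predictions of the honest expert in the first $k-1$ stages, then $\tilde{p}^1_k = g^{(S_{k-1})}(\rho)$. Symmetrically, under the true policy $\tilde{p}^1_k = g^{(-T_{k-1})}(\rho)$, where $T_{k-1}$ counts the honest errors in the first $k-1$ stages.

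Third, let $I_k := \mathbb{1}(x^2_k = y_k)$, so the $I_k$ are i.i.d.\ Bernoulli$(\mu)$. Since $S_{k-1}$ depends only on $I_1, \ldots, I_{k-1}$ and is therefore independent of $I_k$, the expected per-stage loss under the false policy factorizes as $(1-\mu)\,Q(1) + \mu\,\mathbb{E}[Q(g^{(S_{k-1})}(\rho))]$. Summing over $k$ produces the constant $n(1-\mu)Q(1)$ plus $\mathbb{E}\!\left[\sum_{k=1}^{n} I_k\, Q\!\left(g^{(S_{k-1})}(\rho)\right)\right]$. For the crucial collapsing step I would invoke the pathwise identity
\[
\sum_{k=1}^{n} I_k\, Q\!\left(g^{(S_{k-1})}(\rho)\right) \;=\; \sum_{j=0}^{S_n-1} Q\!\left(g^{(j)}(\rho)\right),
\]
which holds because the term $Q(g^{(j)}(\rho))$ contributes at exactly one index $k$, namely the unique stage at which the counter $S$ first jumps from $j$ to $j+1$. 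Taking expectation and recognizing $S_n = Z \sim \mathrm{Bin}(n,\mu)$ converts the right-hand side into $\sum_{j=0}^{n-1}\mathbb{P}(Z > j)\,Q(g^{(j)}(\rho))$, matching the stated formula for $V^{\rm f}_n(\rho)$ (the $j=n$ term vanishes). The expression for $V^{\rm t}_n(\rho)$ follows by the mirror argument, with $1 - g^{(-j)}(\rho)$ replacing $g^{(j)}(\rho)$ and $W \sim \mathrm{Bin}(n, 1-\mu)$ replacing $Z$.

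The main obstacle is the third step. A direct attack via the binomial pmf of $S_{k-1}$ and an exchange of summations forces one into the classical identity $\sum_{k=j+1}^{n}\binom{k-1}{j}\mu^{j+1}(1-\mu)^{k-1-j} = \mathbb{P}(Z > j)$ relating the binomial survival function to a negative-binomial tail. The telescoping observation sidesteps that bookkeeping entirely and renders the stated formula nearly immediate once the case analysis and the weight trajectory have been pinned down.
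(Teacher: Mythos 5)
Your proof is correct and follows essentially the same route as the paper's: both arguments rest on the observation that along any sample path the accumulated loss depends only on the number $i$ of correct honest predictions --- the relative weight degrades through $g^{(0)}(\rho), g^{(1)}(\rho), \ldots$ at exactly those stages, contributing $\sum_{j=0}^{i-1}Q(g^{(j)}(\rho))$, while each wrong prediction contributes $Q(1)$ --- after which the expectation collapses to the binomial tail sum $\sum_{j}\mathbb{P}(Z>j)Q(g^{(j)}(\rho))$. Your stage-indexed telescoping identity is a repackaging of the paper's ``independent of what positions'' claim followed by its swap of the order of summation, so the two proofs differ only in bookkeeping.
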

\begin{proof}
Let us fix the adversary's policy to the false policy, and we look at all the possible sample paths which can be realized by predictions of the honest expert. Any sample-path in which the honest expert predicts correctly $i$ times and makes a mistake $n-i$ times will occur with the same probability of $\mu^i(1-\mu)^{n-i}$. There are exactly ${n \choose i}$ of such sample paths, and for any of such sample paths, independent of what positions the honest expert predicts correctly or wrongly, the incurred loss given the fixed adversary's false policy equals to $(n-i)Q(1)+\sum_{j=0}^{i-1}Q(g^{(j)}(\rho))$. This is because, for any of $n-i$ false predictions of the honest agent in this sample-path, the system incurs a loss of $Q(1)$ and by \eqref{eq:update-rule-relative} the relative weight of the adversary does not change. Moreover, for the remaining $i$ correct predictions and regardless of their order, the system incurs a loss of $\sum_{j=0}^{i-1}Q(g^{(j)}(\rho))$ (note that for $i=0$ this term equals to 0). Therefore, by taking an expectation over all possible sample paths we have,
\begin{align}\nonumber
&V^{\rm f}_n(\rho)=\sum_{i=0}^{n}{n \choose i}\mu^i(1-\mu)^{n-i}(n-i)Q(1)\cr 
&\qquad\qquad+\sum_{i=0}^{n}{n \choose i}\mu^i(1-\mu)^{n-i}\sum_{j=0}^{i-1}Q(g^{(j)}(\rho))\cr 
&=n(1-\mu)Q(1)+\sum_{i=0}^{n}\sum_{j=0}^{i-1}{n \choose i}\mu^i(1-\mu)^{n-i}Q(g^{(j)}(\rho))\cr 
&=n(1-\mu)Q(1)+\sum_{j=0}^{n-1}\sum_{i=j+1}^{n}{n \choose i}\mu^i(1-\mu)^{n-i}Q(g^{(j)}(\rho))\cr 
&=n(1-\mu)Q(1)+\sum_{j=0}^{n}\mathbb{P}(Z>j)Q(g^{(j)}(\rho)),
\end{align}
where in the last equality we have used the fact that $Z\sim Bin(n,\mu)$ and $\mathbb{P}(Z>n)=0$. 

Similarly, to compute $V^{\rm t}_n(\rho)$ we can fix the adversary's policy to the true policy. Now for any sample-path realized by the honest expert with $i$ correct predictions, the system incurs a loss of $i\cdot Q(0)+\sum_{j=0}^{n-i-1}Q(1-g^{(-j)}(\rho))$. Finally, taking an expectation over all sample paths we get
\begin{align}\nonumber
&V^{\rm t}_n(\rho)=\sum_{i=0}^{n}{n \choose i}\mu^i(1-\mu)^{n-i}i\cdot Q(0)\cr
&\qquad\qquad+\sum_{i=0}^{n}{n \choose i}\mu^i(1-\mu)^{n-i}\sum_{j=0}^{n-i-1}Q(1-g^{(-j)}(\rho))\cr 
&=n\mu Q(0)+\sum_{j=0}^{n-1}\sum_{i=0}^{n-j-1}{n \choose i}\mu^i(1-\mu)^{n-i}Q(1-g^{(-j)}(\rho))\cr
&=n\mu Q(0)+\sum_{j=0}^{n}\mathbb{P}(W>j)Q(1-g^{(-j)}(\rho)),
\end{align}
where the second equality is by switching the order of summations, and the last equality is by $W\sim Bin(n,1-\mu)$.
\end{proof}

Next, we note that every offline policy can be partitioned into several blocks such that within each block the adversary follows either false or true policies. Thus we can characterize any offline policy by simply determining the length of each of its sub-blocks. For this purpose, let $n_1,m_1,n_2,m_2,\ldots,n_k,m_k$, denote the partition of the entire horizon $N$ into some sub-horizons of integer length for some positive integer $k$ such that $N=\sum_{i=1}^k(n_i+m_i)$, and $m_i,n_i\in \mathbb{Z}^+$ (note that $n_1$ or $m_k$ can also be zero). We assume that the adversary follows the false policy within each block of length $n_i$, and the true policy within each block of length $m_i$. Therefore, finding the optimal offline policy reduces to maximizing the expected loss \eqref{eq:value-function} over all such partitions.

\begin{lemma}\label{lemm:distribution}
Given an adversary's initial relative weight $\rho$, the relative weight of the adversary after lying $n$ times and telling truth $m$ times (in any arbitrary order) equals to $g^{(X-Y)}(\rho)$, where $X\sim Bin(n,\mu)$ and $Y\sim Bin(m,1-\mu)$ are independent Binomial random variables.
\end{lemma}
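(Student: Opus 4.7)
The plan is to track the relative weight updates through the $n+m$ rounds and reduce the problem to counting multiplicative events of two types, whose order will turn out to be irrelevant.

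First, I would analyze the $n$ lying rounds separately from the $m$ truth-telling rounds. By \eqref{eq:update-rule-relative}, on a lying round ($x^1_k=1-y_k$) the weight $\tilde{p}^1_k$ is updated by $g$ exactly when $x^2_k=y_k$; otherwise both experts predict $1-y_k$ and the weight is unchanged. Since $x^2_k=y_k$ holds independently with probability $\mu$, the number $X$ of updating events inside the lying block is $\mathrm{Bin}(n,\mu)$. A symmetric argument on the $m$ truth-telling rounds shows that the weight is updated by $g^{(-1)}$ exactly when the honest expert errs, and the count $Y$ of such events is $\mathrm{Bin}(m,1-\mu)$. Because the two blocks of rounds are disjoint and the honest expert's predictions are independent across rounds, $X$ and $Y$ are independent.

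Next I would establish that $g$ and $g^{(-1)}$ commute as self-maps of $(0,1]$. From \eqref{eq:composition-rule} both maps have the common form $\rho\mapsto 1/(1+c(1/\rho-1))$, with $c=1/\epsilon$ for $g$ and $c=\epsilon$ for $g^{(-1)}$; composing two such maps with constants $c_1,c_2$ returns the map with constant $c_1 c_2$. Commutativity is therefore immediate, and the same identity specializes to $g^{(X)}\circ g^{(-Y)}=g^{(X-Y)}$.

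Putting these pieces together, regardless of the sample-path order in which the $X$ applications of $g$ and the $Y$ applications of $g^{(-1)}$ are interleaved among the $n+m$ rounds, the final weight equals $g^{(X-Y)}(\rho)$, as claimed. The only mild obstacle is to verify that the three cases of \eqref{eq:update-rule-relative} exhaust all possibilities within each type of round, so that no multiplicative event goes unaccounted for; this is direct from the case analysis.
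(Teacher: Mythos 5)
Your proposal is correct and follows essentially the same route as the paper: the paper phrases the argument as an induction over stages maintaining that the weight has the form $g^{(U)}(\rho)$ with $U$ a sum of independent Bernoulli increments, while you count the $g$- and $g^{(-1)}$-events within each block and invoke commutativity of the two maps, but the underlying observations (an update occurs on a lying round exactly when the honest expert is correct, i.e.\ with probability $\mu$, and on a truthful round with probability $1-\mu$; and compositions add in the exponent as in \eqref{eq:composition-rule}) are identical. Your explicit verification that $g$ and $g^{(-1)}$ commute is precisely the fact that the paper's ``simple induction'' tacitly relies on.
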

\begin{proof}
Let $\bar{X}_i\sim Ber(\mu)$ and $\bar{Y}_i\sim Ber(1-\mu), i=1,2,\ldots$ be independent Bernoulli random variables, and $\rho$ be the initial weight of the adversary. Since at each stage the honest expert predicts independently from the earlier stages,  a simple induction shows that if the adversary's weight at the beginning of the $k$th stage equals to $g^{(U)}(\rho)$ for some random variable $U$, then after the $k$th stage depending on whether he lies or tells the truth, his weight will change to $g^{(U+\bar{X}_k)}(\rho)$ or $g^{(U-\bar{Y}_k)}(\rho)$, respectively. Therefore, if we know that the adversary lies exactly $n$ times and tells the truth $m$ times, his relative weight at the end of this process will be equal to $g^{(X-Y)}(\rho)$, where $X$ is the sum of $n$ independent Bernoulli random variables of type $\bar{X}_i\sim Ber(\mu)$, and $Y$ is the sum of $m$ independent Bernoulli random variables of type $\bar{Y}_i\sim Ber(1-\mu)$. This implies that $X\sim Bin(n,\mu)$ and $Y\sim Bin(m,1-\mu)$, and that $X$ and $Y$ are independent.
\end{proof}

Lemma \ref{lemm:distribution} indicates that the distribution of the adversary's relative weight, induced by following an offline policy $\Psi$, only depends on the total number of times that the adversary lies or tells the truth, and not on the specific order of them. Note that this property only holds for the relative weight distribution, but not for the distribution of the accumulated loss at different stages. In fact, it can be shown that the distribution of loss depends critically on the order of the adversary's actions, and that is the main difficulty in the analysis of the optimal offline policy. We circumvent this issue in the following theorem by providing an approximation scheme that is asymptotically optimal as the number of stages approaches infinity.

\begin{theorem}\label{thm:main}
For any $\epsilon\in (0,1)$, and the absolute loss function $l(\hat{y},y)=|\hat{y}-y|$, we have $\frac{V^{\Psi^*}_N(0.5)}{V^{\rm f}_N(0.5)}=1+O(\sqrt{\frac{\log_{1/\epsilon} N}{N}})$, where $V^{\Psi^*}_N(0.5)$ and $V^{\rm f}_N(0.5)$ denote the expected loss by following the optimal policy and the false policy, respectively.
\end{theorem}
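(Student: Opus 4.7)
The plan is to establish the additive bound $V^{\Psi^*}_N(0.5)-V^{\rm f}_N(0.5)=O(\sqrt{N\log_{1/\epsilon} N})$; combined with the lower bound $V^{\rm f}_N(0.5)\ge N(1-\mu)=\Theta(N)$ from Lemma \ref{lemm:block} with $Q(y)=y$, this immediately yields the claimed ratio. A per-stage decomposition sets the stage: conditioning on $\tilde{p}^1_k$ and the adversary's indicator $L_k:=\mathbf{1}[x^1_k\neq y_k]$, a direct calculation over the random honest prediction $x^2_k$ gives
\[
\mathbb{E}\bigl[l(\hat{y}_k,y_k)\,\big|\,\tilde{p}^1_k,L_k\bigr] \;=\; (1-\mu)\;+\;\tilde{p}^1_k\bigl[\mu L_k-(1-\mu)(1-L_k)\bigr],
\]
so that both $V^{\Psi}_N(0.5)$ and $V^{\rm f}_N(0.5)$ take the form $N(1-\mu)$ plus a weighted sum of expected weights, reducing the problem to bounding the excess weighted gain of any policy over the false baseline.

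By the block-decomposition remark preceding Lemma \ref{lemm:distribution}, any offline policy $\Psi$ is encoded by a tuple $(n_1,m_1,\ldots,n_K,m_K)$ describing alternating F- and T-blocks, and Lemma \ref{lemm:distribution} states that the weight after $n$ lies and $m$ truths is $g^{(X-Y)}(0.5)$ with independent $X\sim \mathrm{Bin}(n,\mu)$ and $Y\sim \mathrm{Bin}(m,1-\mu)$. Applying Hoeffding's inequality together with a union bound over $k\le N$, there is a high-probability event $\mathcal{E}$ with $\mathbb{P}(\mathcal{E})\ge 1-1/N$ on which the centered deviation $|(X_k-Y_k)-\mathbb{E}[X_k-Y_k]|$ is at most $c\sqrt{N\log N}$ uniformly in $k$; outside $\mathcal{E}$ the expected-loss contribution is at most $N\cdot\mathbb{P}(\mathcal{E}^c)=O(1)$. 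Using the closed-form iterate $g^{(j)}(\rho)=1/(1+(1/\rho-1)\epsilon^{-j})$ from \eqref{eq:composition-rule}, on $\mathcal{E}$ each $\tilde{p}^1_k$ is pinned to its drift-predicted value $1/(1+\epsilon^{-(n_k\mu-m_k(1-\mu))})$ up to an $O(\sqrt{N\log_{1/\epsilon} N})$ fluctuation in the log-odds.

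The body of the argument then bounds the optimal excess gain by first analyzing a structured sub-family and then reducing the general case to it. Concretely, I would consider the T-then-F policies $\hat{\Psi}_m$ consisting of an initial T-block of length $m$ followed by an F-block of length $N-m$; Lemma \ref{lemm:block} yields closed-form expressions for $V^{\hat{\Psi}_m}_N(0.5)$ in terms of partial binomial tail sums, and optimizing over $m$ (with the optimum lying near $m\asymp\sqrt{N/\log_{1/\epsilon} N}$) gives $\sup_m V^{\hat{\Psi}_m}_N(0.5)-V^{\rm f}_N(0.5)=O(\sqrt{N\log_{1/\epsilon} N})$. A rearrangement/exchange argument, leveraging the Step-2 concentration bound, then shows that any general multi-block policy differs in expected loss from some $\hat{\Psi}_m$ by at most $O(\sqrt{N\log_{1/\epsilon} N})$, which completes the additive bound and hence the theorem.

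The main technical obstacle is the rearrangement step. Lemma \ref{lemm:distribution} controls only the final weight distribution after a block, not the accumulated loss within it, and the loss is genuinely order-dependent: a lie at a large value of $\tilde{p}^1_k$ contributes more to the gain than a lie after $\tilde{p}^1_k$ has decayed. Bounding the cost of exchanging adjacent F- and T-blocks therefore requires tracking how each swap perturbs both the immediate loss term $\tilde{p}^1_k[\mu L_k-(1-\mu)(1-L_k)]$ and the downstream weight trajectory. The Step-2 concentration is crucial here because it confines the stages with $\tilde{p}^1_k$ bounded away from $0$ and $1$ to a window of size $O(\sqrt{N\log_{1/\epsilon} N})$, within which the order-dependent excess can be controlled.
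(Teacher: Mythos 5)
Your overall strategy (prove the additive bound $V^{\Psi^*}_N(0.5)-V^{\rm f}_N(0.5)=O(\sqrt{N\log_{1/\epsilon}N})$ and divide by $V^{\rm f}_N(0.5)\ge(1-\mu)N$) is the right one, and your per-stage decomposition and block encoding match the paper. But the proof has a genuine gap exactly where you flag the ``main technical obstacle'': the rearrangement/exchange step that reduces an arbitrary multi-block policy to a T-then-F policy is asserted, not proved, and the ingredients you offer for it do not suffice. First, the concentration claim underpinning it is false for general policies: you assert that the stages where $\tilde{p}^1_k$ is bounded away from $0$ and $1$ are confined to a window of size $O(\sqrt{N\log N})$, but a policy that alternates lies and truths in the ratio $(1-\mu):\mu$ (precisely the ratio policy of Section \ref{sec:optimal-pattern}) keeps the drift $N_k\mu-M_k(1-\mu)$ at zero for a constant fraction of the horizon, so $\tilde{p}^1_k$ stays near $0.5$ for $\Theta(N)$ stages. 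Second, even granting concentration, the swap accounting does not close: exchanging two adjacent F/T stages perturbs the weight at one intermediate position by up to a constant, hence changes the expected loss by $O(1)$ per swap, and sorting a general sequence into T-then-F form can require $\Omega(N^2)$ adjacent swaps --- far too lossy for an $O(\sqrt{N\log N})$ bound. Since Lemma \ref{lemm:distribution} controls only the terminal weight distribution of a block and the accumulated loss is genuinely order-dependent (as you yourself note), the reduction needs a new idea that the proposal does not supply.

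For contrast, the paper avoids any rearrangement. It constructs potential functions $f(r)=r-\ln(1+ae^r)$ and $h(r)=\ln(a+e^r)$ whose unit differences approximate $g^{(j)}(\rho)$ and $g^{(-j)}(\rho)$ (Lemma \ref{lemm:appendix}), which turns the closed forms of Lemma \ref{lemm:block} into upper bounds $\mathbb{E}[V^{\rm f}_{n_\ell}]\le n_\ell-\mathbb{E}[\ln(\cdot)]+(\text{weight difference})$ valid for \emph{every} block, with the block's random initial weight $g^{(X_{\ell-1}-Y_{\ell-1})}(0.5)$ plugged in via Lemma \ref{lemm:distribution}. Summing over blocks, the logarithmic terms telescope exactly, and the only residue is the bonus term $\sum_{\ell}\mathbb{E}\bigl[\tfrac{1}{1+e^{X_\ell-Y_\ell}}-\tfrac{1}{1+e^{X_\ell-Y_{\ell-1}}}\bigr]$, which is bounded by $O(\sqrt{N\ln N})$ uniformly over all block structures using Berry--Esseen plus a Cauchy--Schwarz/harmonic-sum estimate (Lemmas \ref{lemm:berry-essen}--\ref{lemm:main}). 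If you want to salvage your route, you would need to replace the exchange argument with something of comparable strength; as written, the proposal does not establish the theorem.
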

\begin{proof}
For simplicity and without any loss of generality we set $\epsilon=\frac{1}{e}$. For general $\epsilon\in (0,1)$, the only difference in our analysis would be that the base of the natural logarithm will change to $\frac{1}{\epsilon}$. Using Lemma \ref{lemm:block} specialized for the absolute loss function $Q(y)=y$, we can write
\begin{align}\label{eq:greedy}
&V_n^{\rm f}(\rho)=(1-\mu)n+\sum_{j=0}^{n}\mathbb{P}(Z> j)g^{(j)}(\rho),\cr 
&V_n^{\rm t}(\rho)=(1-\mu)n-\sum_{j=0}^{n}\mathbb{P}(W>j)g^{(-j)}(\rho),
\end{align}
where $Z\sim Bin(n, \mu)$ and $W\sim Bin(n,1-\mu)$. For any arbitrary but fixed $\rho\in (0,1)$, let us define 
\begin{align}\nonumber
&f(r):=r-\ln(1+ae^r),\cr 
&\epsilon(r):=f(r+1)-f(r)-g^{(r)}(\rho), 
\end{align}
where $a:=\frac{1}{\rho}-1$ and $r\in [0,\infty)$. Now we can write,
\begin{align}\label{eq:upper-middle-f}
V_n^{\rm f}(\rho)&=(1-\mu)n+\sum_{j=0}^{n}\mathbb{P}(Z> j)[f(j+1)-f(j)-\epsilon(j)]\cr 
&=(1-\mu)n-\sum_{j=0}^{n}\mathbb{P}(Z> j)\epsilon(j)\cr 
&\qquad+\sum_{j=0}^{n}[\mathbb{P}(Z> j-1)-\mathbb{P}(Z> j)]f(j)-f(0)\cr 
&=(1-\mu)n-\sum_{j=0}^{n}\mathbb{P}(Z> j)\epsilon(j)+\mathbb{E}[f(Z)]-f(0)\cr 
&\stackrel{\text{(a)}}{\leq} (1-\mu)n+\mathbb{E}[f(Z)]-f(0)\cr 
&\qquad-\sum_{j=0}^{n}\mathbb{P}(Z> j)\left(\frac{1}{1+ae^{j+1}}-\frac{1}{1+ae^{j}}\right)\cr
&\stackrel{\text{(b)}}{=}(1-\mu)n+\mathbb{E}[f(Z)]-f(0)+g^{(0)}(\rho)-\mathbb{E}[g^{(Z)}(\rho)]\cr 
&\stackrel{\text{(c)}}{=}n-\mathbb{E}[\ln(1+(\frac{1}{\rho}-1)e^Z)]-\ln(\rho)\cr 
&\qquad+\rho-\mathbb{E}[\frac{1}{1+(\frac{1}{\rho}-1)e^Z}],
\end{align}
where $(a)$ is due to Lemma \ref{lemm:appendix} (Appendix A) which shows that $\epsilon(r)\ge \frac{1}{1+ae^{r+1}}-\frac{1}{1+ae^{r}}$, and $(b)$ follows from \eqref{eq:composition-rule} and the definition of expectation. Finally, $(c)$ follows by substituting the expressions for $f(Z)$ and $g^{(Z)}(\rho)$. Similarly, to obtain an upper bound for $V_n^{\rm t}(\rho)$, let us define 
\begin{align}\nonumber
&h(r):=\ln(a+e^r),\cr 
&\delta(r):=h(r+1)-h(r)-g^{(-r)}(\rho).
\end{align}
Using identical steps as in the derivation of \eqref{eq:upper-middle-f} and since by Lemma \ref{lemm:appendix}, $\delta(r)\leq \frac{1}{1+ae^{-(r+1)}}-\frac{1}{1+ae^{-r}}$, we get
\begin{align}\label{eq:upper-h}
V_n^{\rm t}(\rho)&\leq(1-\mu)n-\mathbb{E}[\ln((\frac{1}{\rho}-1)+e^W)]-\ln(\rho)\cr 
&\qquad-\rho+\mathbb{E}[\frac{1}{1+(\frac{1}{\rho}-1)e^{-W}}].
\end{align}

Next let us consider an arbitrary offline policy $\Psi$ characterized by its false/true sub-block, i.e., $\Psi:=n_1,m_1,\ldots,n_k,m_k$. Denote the expected loss under policy $\Psi$ when the initial weight of the adversary was $0.5$ by $V^{\Psi}_N(0.5)$. Moreover, for $\ell=1,\ldots,k$, let $\bar{X}_{\ell}\sim Bin(n_{\ell}, \mu)$ and  $\bar{Y}_{\ell}\sim Bin(m_{\ell}, 1-\mu)$ be pairwise independent Binomial distributions (i.e., for every $i$ and every $j$, $\bar{X}_i$ and $\bar{Y}_j$ are independent) and define $X_{\ell}=\sum_{i=1}^{\ell}\bar{X}_{i}$, and $Y_{\ell}=\sum_{i=1}^{\ell}\bar{Y}_{i}$. Note that the pair $X_{\ell}$ and $Y_{\ell}$ are independent Binomial distributions. By linearity of expectation, and using Lemma \ref{lemm:distribution} we can write,
\begin{align}\label{eq:policy-pi-block-multiplicative}
&V^{\Psi}_N(0.5)=V_{n_1}^{\rm f}(0.5)+\mathbb{E}\left[V_{m_1}^{\rm t}\!\left(g^{(X_1)}(0.5)\right)\!\right]\cr 
&\qquad+\mathbb{E}\left[V_{n_2}^{\rm f}\!\left(g^{(X_1\!-\!Y_1)}(0.5)\right)\!\right]+\mathbb{E}\left[V_{m_2}^{\rm t}\left(g^{(X_2-Y_1)}(0.5)\right)\right]\cr 
&\qquad+\ldots+\mathbb{E}\left[V_{m_k}^{\rm t}\!\left(g^{(X_k\!-\!Y_{k-1})}(0.5)\right)\!\right].
\end{align}
Replacing $g^{(X_{\ell-1}-Y_{\ell-1})}(0.5)$ (or for brevity $g^{(X_{\ell-1}-Y_{\ell-1})}$) from \eqref{eq:composition-rule} instead of $\rho$ in \eqref{eq:upper-middle-f}, and taking expectation we have
\begin{align}\label{n-l-block-upper-bound}
&\mathbb{E}\left[V_{n_{\ell}}^{\rm f}\left(g^{(X_{\ell-1}-Y_{\ell-1})}\right)\right] \cr 
&\qquad\leq\mathbb{E}\left[n_{\ell}-\mathbb{E}\Big[\ln\left(1+(\frac{1}{g^{(X_{\ell-1}-Y_{\ell-1})}}-1)e^{Z}\right)\Big]\right]\cr 
&\qquad\qquad-\mathbb{E}\left[\ln\left(g^{(X_{\ell-1}-Y_{\ell-1})}\right)\right]\cr 
&\qquad\qquad+\mathbb{E}\Big[g^{(X_{\ell\!-\!1}\!-\!Y_{\ell\!-\!1})}-\mathbb{E}\Big[\frac{1}{1\!+\!(\frac{1}{g^{(X_{\ell\!-\!1}\!-\!Y_{\ell\!-\!1})}}\!-\!1)e^{Z}}\!\Big]\Big]\cr  
&\qquad=n_{\ell}-\mathbb{E}\left[\ln\left(\frac{1+e^{X_{\ell}-Y_{\ell-1}}}{1+e^{X_{\ell-1}-Y_{\ell-1}}}\right)\right]\cr 
&\qquad\qquad+\mathbb{E}\Big[\frac{1}{1+e^{X_{\ell-1}-Y_{\ell-1}}}-\frac{1}{1+e^{X_{\ell}-Y_{\ell-1}}}\Big], 
\end{align}
where the equality follows by simplifying the terms and noting that for the $\ell$-the false block $Z:=\bar{X}_{\ell}\sim Bin(n_{\ell},\mu)$, which is independent of $X_{\ell-1}$ and $Y_{\ell-1}$. Similarly, since for the $\ell$-th true block $W:=\bar{Y}_{\ell}\sim Bin(m_{\ell},1-\mu)$, which is independent of $X_{\ell}$ and $Y_{\ell-1}$, by replacing $g^{X_{\ell}-Y_{\ell-1}}(0.5)$ instead of $\rho$ into \eqref{eq:upper-h} and taking expectation we get
\begin{align}\label{m-l-block-lower-bound}
&\mathbb{E}\left[V_{m_{\ell}}^{\rm t}\left(g^{(X_{\ell}-Y_{\ell-1})}(0.5)\right)\right]\cr 
&\qquad\qquad\leq(1-\mu)m_{\ell}-\mathbb{E}\Big[\ln\Big(\frac{e^{X_{\ell}-Y_{\ell\!-\!1}}+\!e^{\bar{Y}_{\ell}}}{1\!+\!e^{X_{\ell}-Y_{\ell\!-\!1}}}\Big)\Big]\cr 
&\qquad\qquad\qquad+\mathbb{E}\Big[\frac{1}{1+e^{X_{\ell}-Y_{\ell}}}\!-\!\frac{1}{1+e^{X_{\ell}-Y_{\ell\!-\!1}}}\Big].
\end{align}Finally, substituting \eqref{n-l-block-upper-bound} and \eqref{m-l-block-lower-bound} into \eqref{eq:policy-pi-block-multiplicative}, we can write
\begin{align}\label{eq:upper-sum}
&V^{\Psi}_N(0.5)\leq \sum_{\ell=1}^{k}(n_{\ell}+(1-\mu)m_{\ell})\cr 
&\qquad-\mathbb{E}\Big[\sum_{\ell=1}^{k}\Big(\!\ln\!\Big(\!\frac{e^{X_{\ell}-Y_{\ell-1}}+e^{\bar{Y}_{\ell}}}{1+e^{X_{\ell}-Y_{\ell-1}}}\Big)\!+\!\ln\Big(\frac{1+e^{X_{\ell}-Y_{\ell-1}}}{1+e^{X_{\ell-1}-Y_{\ell-1}}}\!\Big)\Big)\!\Big]\cr
&\qquad+2\sum_{\ell=1}^{k}\mathbb{E}\Big[\frac{1}{1\!+\!e^{X_{\ell}-Y_{\ell}}}\!-\!\frac{1}{1\!+\!e^{X_{\ell}-Y_{\ell\!-\!1}}}\Big]\cr  
&\qquad=\mu M+(1-\mu)N\!-\!\mathbb{E}\left[\ln\left(\prod_{\ell=1}^{k}\frac{e^{X_{\ell}-Y_{\ell-1}}+e^{\bar{Y}_{\ell}}}{1+e^{X_{\ell-1}-Y_{\ell-1}}}\right)\right]\cr 
&\qquad\qquad+O(\sqrt{N\ln N})\cr
&\qquad= (1\!-\!\mu)N\!-\!\mathbb{E}\!\left[\ln\!\left(\!\frac{1\!+\!e^{Y_k-X_{k}}}{2}\!\right)\!\right]+O(\sqrt{N\ln N})\cr 
&\qquad=(1-\mu)N+O(\sqrt{N\ln N}),
\end{align}
where the first equality is due to Lemma \ref{lemm:main} (Appendix A) and noting that $\sum_{\ell=1}^{k}n_{\ell}=M$ and $\sum_{\ell=1}^{k}m_{\ell}=N-M$. Moreover, the second equality holds because $\bar{Y}_{\ell}=Y_{\ell}-Y_{\ell-1}$ (which causes telescopic cancellation of the product terms inside of the natural logarithm) and noting that  $X_0=Y_0=0$, $\mathbb{E}[X_k]=\mu M$. Using \eqref{eq:greedy}, the expected loss of the false policy is at least, 
\begin{align}\nonumber
V_N^{\rm f}(0.5)=(1-\mu)N+\sum_{j=0}^{N}\mathbb{P}(Z\!>\! j)g^{(j)}(0.5)\ge(1-\mu)N.
\end{align}
(Note that all the terms $\mathbb{P}(Z> j)g^{(j)}(0.5)$ are nonnegative.) This in view of \eqref{eq:upper-sum} shows that $\frac{V^{\Psi^*}_N(0.5)}{V^{\rm f}_N(0.5)}=1+O(\sqrt{\frac{\ln N}{N}})$.
\end{proof}

\begin{remark}
It is important to distinguish the difference between the approximation ratio obtained in Theorem \ref{thm:main} and the sub-linear regret bounds commonly derived in regret minimization analysis. Here we allow the offline malicious expert to choose his offline {\bf policy} over the entire horizon (i.e., an arbitrary sequence of false/true predictions) and do not restrict him to his  {\bf action} (i.e., only select one action and commit to it at all the stages). Surprisingly, Theorem \ref{thm:main} shows that giving such an extra power to the malicious expert does not bring him much gain other than a negligible additive term.
\end{remark}

%For a positive accuracy $\mu$, following the false policy induces an asymptotic loss of at least $(1-\mu)N$ on the system. The reason is that from (5), the relative weight of the adversary goes down only if he lies, and the honest expert predicts correctly. But for $1-\mu$ fraction of the outcomes, the honest expert also makes a wrong prediction, and hence the system still incurs a loss without affecting the adversary's credibility.

Finally, we note that Theorem \ref{thm:main} suggests that the MW learning algorithm with absolute loss function is not very robust against malicious attacks. This is because the malicious expert does not to be very intelligent or have access to many computational resources to destroy the outcomes of the MW learning system; even following a simple false policy can nearly impose the same loss as any other complex policy.

%Although it seems unlikely to obtain a uniform approximation ratio for general class of loss functions, however, most of our analysis for the case of absolute loss function can be carried over to other classes of loss functions (e.g., quadratic or logarithmic loss functions), resulting in possibly different approximation ratios. 

\subsection{Beyond Asymptotic Optimality for the Offline Policy}\label{sec:optimal-pattern}

Theorem \ref{thm:main} shows that the false policy asymptotically achieves the same performance as the optimal offline policy. However, the exact structure of the optimal offline policy for a finite horizon $N$ can be quite complex. Therefore, our goal in this section is to take one step further and provide a policy that closely resembles the structural patterns of the optimal offline policy. As it was shown in the proof of Theorem \ref{thm:main} one of the main reasons that there is a gap between the expected loss of the optimal offline policy and that of the false policy is the term: 
\begin{align}\nonumber
B:=\sum_{\ell=1}^{k}\mathbb{E}\left[\frac{1}{1\!+\!e^{X_{\ell}-Y_{\ell}}}-\frac{1}{1\!+\!e^{X_{\ell}-Y_{\ell\!-\!1}}}\right]
\end{align}
henceforth referred to as the {\bf \textit{bonus}} term. Here, $X_{\ell}\sim Bin(N_{\ell},\mu)$ and $Y_{\ell}\sim Bin(M_{\ell},1-\mu)$ are independent Binomial distributions where $N_{\ell}=\sum_{i=1}^{\ell}n_i$ and $M_{\ell}=\sum_{i=1}^{\ell}m_i$. Therefore, it seems reasonable to expect the optimal offline policy (or a policy close to optimal), to maximize $B$ in order to gain as much as possible from the bonus term. As such, we search the optimal offline policy $\Psi^*$ among policies $\Psi$ that satisfy the following two criteria: 
\begin{itemize}
\item $i$) $\Psi$ imposes at least as much loss as the false policy on the learning system, i.e., at least $(1-\mu)N-o(1)$,
\item $ii$) $\Psi$ maximizes the bonus gain $B$. 
\end{itemize}

To maximize the bonus term, using Lemma \ref{lemm:main} (Appendix A), it is enough to maximize
\begin{align}\label{eq:almost-bonus}
\sum_{\ell=1}^{k}\Big[\Phi(-\frac{\mu_{2\ell}}{\sigma_{2\ell}})-\Phi(-\frac{\mu_{2\ell-1}}{\sigma_{2\ell-1}})\Big],
\end{align}
where $\Phi(\cdot)$ is the CDF of the standard normal distribution and  
\begin{align}\nonumber
&\mu_{2\ell}\!=\!N_{\ell}\mu\!-\!M_{\ell}(1-\mu), \ \ \ \ \ \ \ \ \ \sigma^2_{2\ell}=\mu(1-\mu)(N_{\ell}+M_{\ell}),\cr 
&\mu_{2\ell-1}\!=\!N_{\ell}\mu\!-\!M_{\ell-1}(1-\mu), \  \ \ \sigma^2_{2\ell-1}\!=\!\mu(1\!-\!\mu)(N_{\ell}\!+\!M_{\ell\!-\!1}). 
\end{align}  

Now by adjusting the argument of $\Phi(\cdot)$ in \eqref{eq:almost-bonus} to periodically switch around $0$ (see Figure \ref{fig:normal}), we obtain a positive gain from each of the summands in \eqref{eq:almost-bonus}. This suggests that a policy in which $-\frac{\mu_{2\ell}}{\sigma_{2\ell}}= 1$, and $-\frac{\mu_{2\ell-1}}{\sigma_{2\ell-1}}= -1$, would be a good candidate for maximizing the bonus term. Note that here the choice of $1$ or $-1$ is not strict and it can be replaced by any two points close to zero such that the difference of the normal CDF evaluated at those points gives a sufficiently large gain. Solving $-\frac{\mu_{2\ell}}{\sigma_{2\ell}}= 1$ and $-\frac{\mu_{2\ell-1}}{\sigma_{2\ell-1}}= -1$, by substituting the above expressions for $\mu_{2\ell}, \sigma_{2\ell}, \sigma_{2\ell}, \sigma_{2\ell-1}$, we obtain a ratio type policy in which the ratio of the false/true block lengths $\frac{n_{\ell}}{m_{\ell}}, \ell=1,2,\ldots$, is proportional to $\frac{1-\mu}{\mu}$. Therefore, to fulfill both criteria $(i)$ and $(ii)$, we introduce the following offline {\bf\textit{ratio}} policy:
\begin{figure}[t]
\vspace{-1cm}
\begin{center}
\includegraphics[totalheight=.24\textheight,
width=.4\textwidth,viewport=50 0 550 500]{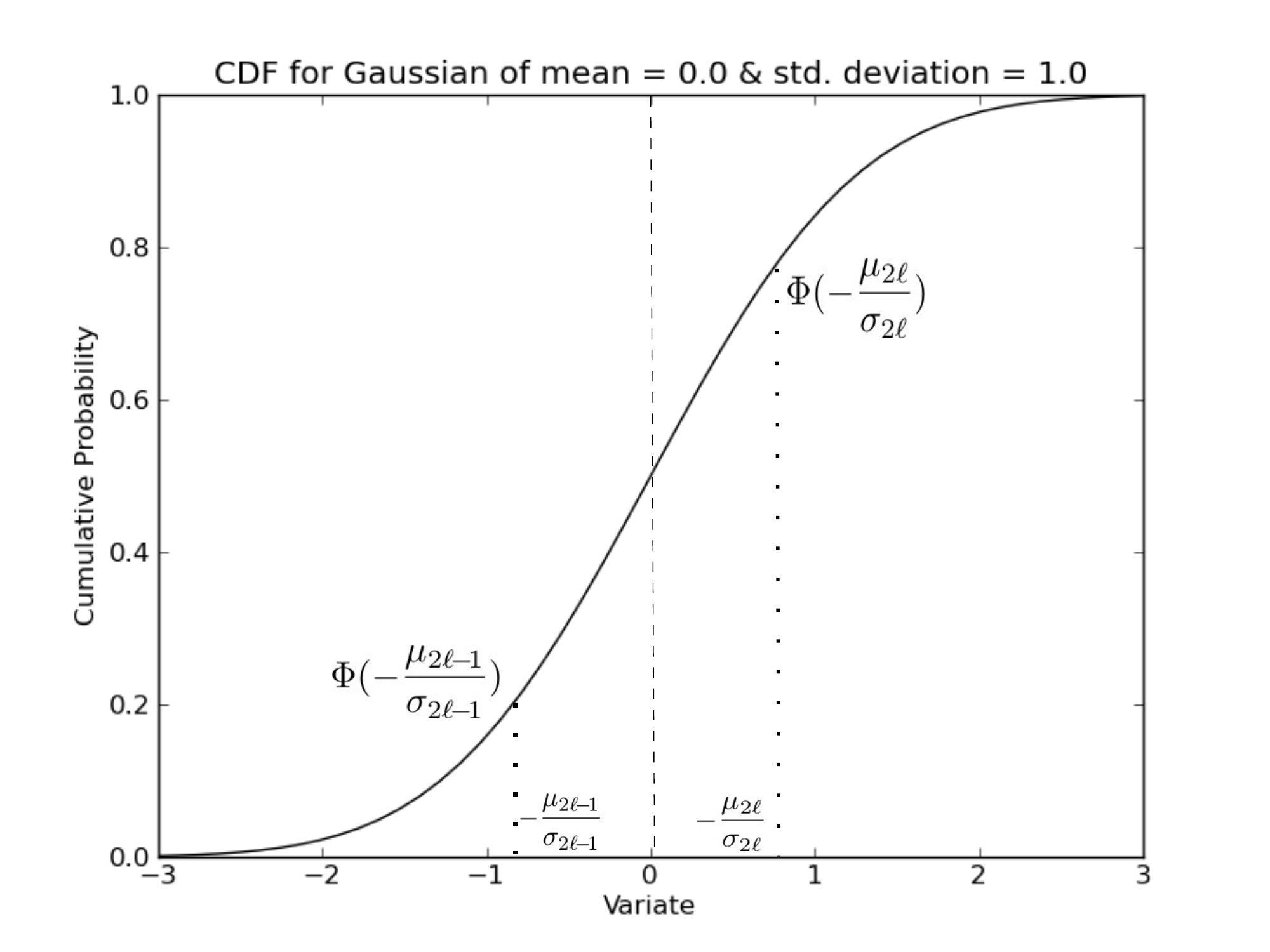} \hspace{0.4in}
\end{center}
\caption{Mean and standard deviation adjustment for maximizing the bonus term $B$ using CDF of the standard normal distribution.}
\label{fig:normal}
\end{figure}

\begin{definition}\label{def:ratio}
Let $a$ and $b$ be the smallest positive integers such that $\frac{a}{b}=\frac{\mu}{1-\mu}$.\footnote{Here for simplicity we have assumed that $\frac{\mu}{1-\mu}$ is a rational number, otherwise, we can always find positive integers such that $\frac{a}{b}\cong\frac{\mu}{1-\mu}$.} We say that $\hat{\Psi}$ is a ratio policy if its block representation is of the form $(n_{1},m_1,\ldots,m_{k-1},n_{k},m_k)=(b,a,b,a,b,\ldots,a,\frac{N}{2},0)$,
where for $\ell=1,2,\ldots,k$, the adversary lies in $n_{\ell}$-blocks and tells the truth in $m_{\ell}$-blocks. 
\end{definition} 

To verify why the ratio policy is indeed a good offline policy, let us denote the number of lies and truths in $\hat{\Psi}$ by $M$ and $N-M$, respectively. Due to Definition \ref{def:ratio}, $\hat{\Psi}$ has many more lies than the truth in its structure (note that $n_{k}=\frac{N}{2}$). Thus a similar analysis as in Lemma \ref{lemm:block} for the false policy reveals that the expected loss in $\hat{\Psi}$ which is obtained from its last block $n_{k}=\frac{N}{2}$ is almost the same as the false policy minus a negligible constant which does not depend on $N$. As a result, the expected loss of $\hat{\Psi}$ which is obtained due to its heavy tail of false predictions is at least $(1-\mu)N-o(1)$. On the other hand, the ratio policy $\hat{\Psi}$ gains extra bonus due to its first $\frac{N}{2}$ stages. To evaluate the bonus term $B$ for the ration policy $\hat{\Psi}$, we observe that due to Definition \ref{def:ratio}, $\mu_{2\ell}=0$, and $\mu_{2\ell-1}=\mu b$, for all $\ell\in [k]$. Thus, for each $\ell\in[k]$, we have 
\begin{align}\nonumber
\Phi(-\frac{\mu_{2\ell}}{\sigma_{2\ell}})-\Phi(-\frac{\mu_{2\ell-1}}{\sigma_{2\ell-1}})=\Phi(0)-\Phi(-\frac{\mu b}{\sigma_{2\ell-1}})>0.
\end{align}
In other words, each of these terms contributes positively with some constant amount to the bonus $B$. Since we chose $a$ and $b$ as the smallest positive integers such that $\frac{a}{b}=\frac{\mu}{1-\mu}$, this assures that the number of summands $k$ in the bonus term $B$ (which equals the number of switching between false/true blocks) is maximized. This is exactly why we defined our ratio policy the way we did in Definition \ref{def:ratio}. This shows that the expected loss of the ratio policy is at least as high as that for the false policy (satisfying criterion $(i)$) with an additional bonus term $B$ (satisfying criterion $(ii)$). 

\section{Optimal Online Policy for the Absolute Loss Function}\label{sec:generalization}

In this section, we consider the problem of finding the optimal online policy for the malicious expert, where we recall that the online adversary is the one who chooses his next action adaptively based on all the past revealed information up to the current stage. In order to be able to find the optimal online policy we first cast it as a dynamic program and then show that it can be solved efficiently in $O(N^3)$.   

For this purpose, let us assume again that the malicious expert is expert $1$ and the other expert is the honest one who makes a correct prediction with probability $\mu$. We assume that at stage $k$, expert $1$ knows the true outcome $y_k$, the accuracy $\mu$ of the honest expert, and the entire history of predictions up to stage $k-1$, i.e., $\{\tilde{p}^1_{\ell},\tilde{p}^2_{\ell}, x^{1}_{\ell},x^{2}_{\ell}, y_{\ell}:  \forall \ell\in[k-1]\}$. Given this information set, the goal of the online malicious expert is to produce a sequence of predictions $\{x^{1}_k\}_{k=1}^{N}$ over a fixed finite horizon $N$ to maximize the expected accumulated loss of the system given by \eqref{eq:value-function}. Now let us define the \emph{state} of the system at stage $k$ to be the relative weight of the adversary at that stage, i.e., $\tilde{p}^1_{k}$ . Note that as $\tilde{p}^1_{k}+\tilde{p}^2_{k}=1, \forall k$, knowing $\tilde{p}^1_{k}$ is sufficient to determine the relative weight of the honest expert $\tilde{p}^2_{k}$.

%we generalize our setting from the case of two experts to multiple experts. Here, we again assume that there is only one malicious but possibly multiple honest experts in the system. In particular, we consider the malicious expert to be an \emph{online} adversary who can adaptively decide on his next action based on the past revealed information. 

Next let us define $c_{x^1_k}(\tilde{p}^1_{k})$ to be the \emph{current} loss that the online adversary can impose on the system at stage $k$ by taking the action $x^1_k$, i.e., 
\begin{align}\label{eq:current-cost}
c_{x^1_k}(\tilde{p}^1_{k})\!=\!\mathbb{E}_{x^2_k}[|\hat{y}_{k}\!-\!y_{k}||x^1_1]\!=\!\begin{cases}1\!-\!\mu\!+\!\mu \tilde{p}^1_{k} & \!\! \mbox{if} \ x^1_k=1\!-\!y_k, \\
(1\!-\!\mu) (1\!-\!\tilde{p}^1_{k}) & \!\!\mbox{if} \ x^1_k=y_k, \end{cases}  
\end{align}
where the second equality is by Lemma \ref{lemm:value} specialized to the absolute loss function $Q(y)=y$. We can then cast the adversary's online optimal policy as a solution to an MDP in which the malicious expert's action at stage $k$ imposes a current loss of $c_{x^1_k}(\tilde{p}^1_{k})$ on the system and changes the state from $\tilde{p}^1_{k}$ to the next state $\tilde{p}^1_{k+1}$. In particular, the state transition of this MDP is given by the update rule \eqref{eq:update-rule-relative}, that is,
\begin{align}\label{eq:next-state}
\tilde{p}^1_{k+1}= \begin{cases} g(\tilde{p}^1_{k}) & \mbox{if } \  x^1_k=1-y_k, x^2_k=y_k, \\
g^{(-1)}(\tilde{p}^1_{k})   & \mbox{if} \  \ x^1_k=y_k, x^2_k=1-y_k,\\
\tilde{p}^1_{k} & \mbox{if } \  x^1_k=x^2_k.\end{cases}
\end{align} 

Now the solution to this MDP can be obtained using dynamic programming as shown in Algorithm \ref{eq:alg}. In this algorithm $V^*_{k+1}(\cdot)$ denotes the optimal value function, i.e., the optimally accumulated loss from time step $k+1$ onward. In particular, from Lemma \ref{lemm:value}, one can easily see that the optimal value function does not depend on the sequence of true outcomes and is only a function of the state and the number of remaining stages. Now by substituting the closed form expressions of the current cost \eqref{eq:current-cost} and the state transition \eqref{eq:next-state} into the DP Algorithm \ref{eq:alg}, and letting $\rho:=\tilde{p}^1_{k}$ for brevity, we obtain the following closed-form recursion for computing the optimal value function:
\begin{align}\label{eq:recursion-DP}
V^*_{k}(\rho)\!&=\!\max\Big\{1\!-\!\mu+\mu \rho+\mu V^*_{k+1}(g(\rho))+(1\!-\!\mu)V^*_{k+1}(\rho),\cr 
&\ \ \ \ (1\!-\!\mu) (1\!-\!\rho)+(1\!-\!\mu)V^*(g^{(-1)}(\rho))+\mu V^*_{k+1}(\rho)\Big\},
\end{align}
where the first term in the maximization \eqref{eq:recursion-DP} corresponds to the adversary's action at stage $k$ being $x^1_k=1-y_k$, and the second term corresponds to the adversary's action being $x^1_k=y_k$. Unfortunately, due to the nonlinear structure of the transition functions $g(\rho)$ and $g^{(-1)}(\rho)$, as well as their joint convex/concave structure, solving the recursion \eqref{eq:recursion-DP} in a closed-form seems to be a tedious task. Although at each stage of the above recursion one needs to consider the maximum of two alternatives (so that the number of alternatives will grow exponentially in terms of the number of stages), however, in the following theorem we show that most of these alternatives collapse on each other so that the optimal value function in \eqref{eq:recursion-DP} can be computed efficiently in polynomial time. 

%\begin{remark}
%The above MDP can be extended to accommodate multiple number of malicious experts. In that case instead of having an MDP we obtain a stochastic game where the decisions of each malicious expert affect those of others. In particular, the notion of optimal policy must be replaced by some notion of equilibrium such as stationary Nash  equilibrium, \cite{basar1999dynamic}.
%\end{remark}

\begin{algorithm}[t]
\caption{DP Algorithm}\label{eq:alg}
 {\bfseries Initialize:} $V_N(\cdot) = c_N(\cdot) = 0$. 

For each step $k = N-1 \mbox{ downto } 0$, find the optimal action
\begin{align}\nonumber
x^{*}_k:=\arg\max_{x^1_k} \big\{c_{x^1_k}(\tilde{p}^1_{k})+\mathbb{E}[V^*_{k+1}(\tilde{p}^1_{k+1})]\big\},
\end{align} 
and the optimal value function,
\begin{small}
\begin{align}\label{eq:5}
V^*_k(\tilde{p}^1_{k})=\max_{x^1_k} \big\{c_{x^1_k}(\tilde{p}^1_{k})+\mathbb{E}[V^*_{k+1}(\tilde{p}^1_{k+1})]\big\}. 
\end{align}\end{small}
 {\bfseries Output:} sequence $x^*_{N-1},V^*_{N-1}(\cdot),...,x^*_{0},V^*_{0}(\cdot).$
\end{algorithm}

\begin{theorem}\label{thm:online-two}
The optimal policy for the online malicious expert can be found in $O(N^3)$, where $N$ is the number of stages. 
\end{theorem}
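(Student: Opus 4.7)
The plan is to exploit the algebraic structure of the update functions $g$ and $g^{(-1)}$ to show that the set of states reachable by the dynamic program \eqref{eq:recursion-DP} is only polynomially large, even though a priori the state $\tilde{p}^1_k$ lives in the continuum $(0,1]$ and each stage offers three possible successors. The key observation is that by \eqref{eq:composition-rule} the maps $\{g^{(i)}\}_{i\in\mathbb{Z}}$ satisfy $g^{(i)}\circ g^{(j)}=g^{(i+j)}$, so $g$ and $g^{(-1)}$ commute under composition and every sequence of such operations applied to a fixed starting point $\rho_0$ collapses to $g^{(s)}(\rho_0)$, where $s$ is the net number of $g$ applications minus the number of $g^{(-1)}$ applications.

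Starting from $\tilde{p}^1_0=0.5=g^{(0)}(0.5)$ and iterating the transition \eqref{eq:next-state} for $k$ stages, the exponent $s$ changes by $+1$, $-1$, or $0$ at each stage, so the set of possible states at stage $k$ is contained in $\mathcal{S}_k:=\{g^{(j)}(0.5):j\in\mathbb{Z},\,|j|\leq k\}$, whose cardinality is at most $2k+1$. Consequently, $V^*_k(\cdot)$ needs to be tabulated only at the $2k+1$ integer indices $j\in\{-k,\ldots,k\}$, and the total number of distinct (stage, state) pairs in the DP is $\sum_{k=0}^{N}(2k+1)=O(N^2)$.

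Next, I would perform a backward sweep starting from $V^*_N\equiv 0$. For each index $(k,j)$ with $|j|\leq k$, the two alternatives inside the maximum of \eqref{eq:recursion-DP} reference $V^*_{k+1}$ only at the neighboring indices $j+1,\, j,\, j-1$, since $g(g^{(j)}(0.5))=g^{(j+1)}(0.5)$ and $g^{(-1)}(g^{(j)}(0.5))=g^{(j-1)}(0.5)$. With a one-time $O(N)$ precomputation of the values $g^{(j)}(0.5)=1/(1+\epsilon^{-j})$ for $|j|\leq N$ via \eqref{eq:composition-rule}, each node update and the associated $\arg\max$ take $O(1)$ time; summing over the $O(N^2)$ nodes gives a running time of $O(N^2)$ or, accounting conservatively for the bit-complexity of arithmetic on the resulting $V^*$-values, the claimed $O(N^3)$ bound. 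The optimal online policy is then recovered by recording the $\arg\max$ at each node and executed in $O(1)$ per stage.

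The main obstacle is rigorously establishing the state-space collapse, namely that interleaving or reordering the $g$ and $g^{(-1)}$ operations along different sample paths yields the same state. This reduces to the composition identity $g^{(i)}\circ g^{(j)}=g^{(i+j)}$, which is immediate from the closed form in \eqref{eq:composition-rule}. Without this identity the reachable state space would grow exponentially with $k$, forcing the DP to track the entire history $\{x^1_\ell,x^2_\ell\}_{\ell=1}^{k-1}$ rather than the single integer index $j$, so the commutativity is precisely what turns an a priori exponential MDP into the polynomial-time DP asserted by the theorem.
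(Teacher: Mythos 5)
Your proposal is correct and follows essentially the same route as the paper: both arguments rest on the composition identity $g^{(i)}\circ g^{(j)}=g^{(i+j)}$ from \eqref{eq:composition-rule}, which collapses the reachable states at stage $k$ to the $O(k)$ values $g^{(j)}(0.5)$ and hence the total state space of the DP \eqref{eq:recursion-DP} to $O(N^2)$, after which a backward sweep yields the $O(N^3)$ bound (the paper phrases this as grouping nodes of a decision tree, while you index states directly by the integer exponent $j$, and you additionally observe that the per-node work is $O(1)$ so the operation count is really $O(N^2)$).
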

\begin{proof}
Let us consider a decision tree with a root node representing the initial relative weight of the adversary (i.e., $\rho=0.5$) and such that the
nodes in the $k$-th level of the tree that are at distance $k$ from the root represent all the possible relative weights of the adversary after $k$
stages (Figure \ref{fig:tree}). The key observation is that due to the property of $g(\cdot)$ and its inverse $g^{(-1)}(\cdot)$, the size of this decision tree does not grow exponentially such as a binary tree. In fact, a simple induction shows that the nodes in the $k$-th level of the tree can be grouped to form exactly $2k-1$ nodes representing all possible relative
weights of the adversary up to stage $k$, given by $g^{(-k)}(0.5), g^{(-k+1)}(0.5),\ldots, g^{(k-1)}(0.5), g^{(k)}(0.5)$.\footnote{Using Lemma \ref{lemm:distribution}, one can even compute the distribution of the weights on the reduced nodes efficiently using Binomial distributions.} Therefore, the total number of tree nodes by such grouping (states in the DP) after $N$ stages is at most $\sum_{k=1}^{N}(2k-1)=O(N^2)$. As a result, solving the dynamic recursion \eqref{eq:recursion-DP} backward by moving from the tree leaves toward the root, the number of computations to find the optimal online policy using the DP recursion \eqref{eq:recursion-DP} is at most $O(N\times N^2)$.
\end{proof}

Finally, in a recent work \cite{bayraktar2020bound}, the authors have analyzed the DP \eqref{eq:recursion-DP} in more detail by providing upper and lower bounds on the optimal value function using the approximated dynamic program's viscosity solution. More precisely, it was shown in \cite{bayraktar2020bound} that for any online policy $\Psi$ for the malicious expert: 
\begin{align}\nonumber
    \limsup_{N\to\infty}\frac{V^{*}_{0}(0.5)}{N} \leq 1 - \mu^2,\ \ \
    \liminf_{N\to\infty}\frac{V^{*}_{0}(0.5)}{N} > 1 - \mu.
\end{align}

\begin{figure}[t]
\begin{center}\vspace{0.5cm}
\includegraphics[totalheight=.2\textheight,
width=.3\textwidth,viewport=75 0 725 650]{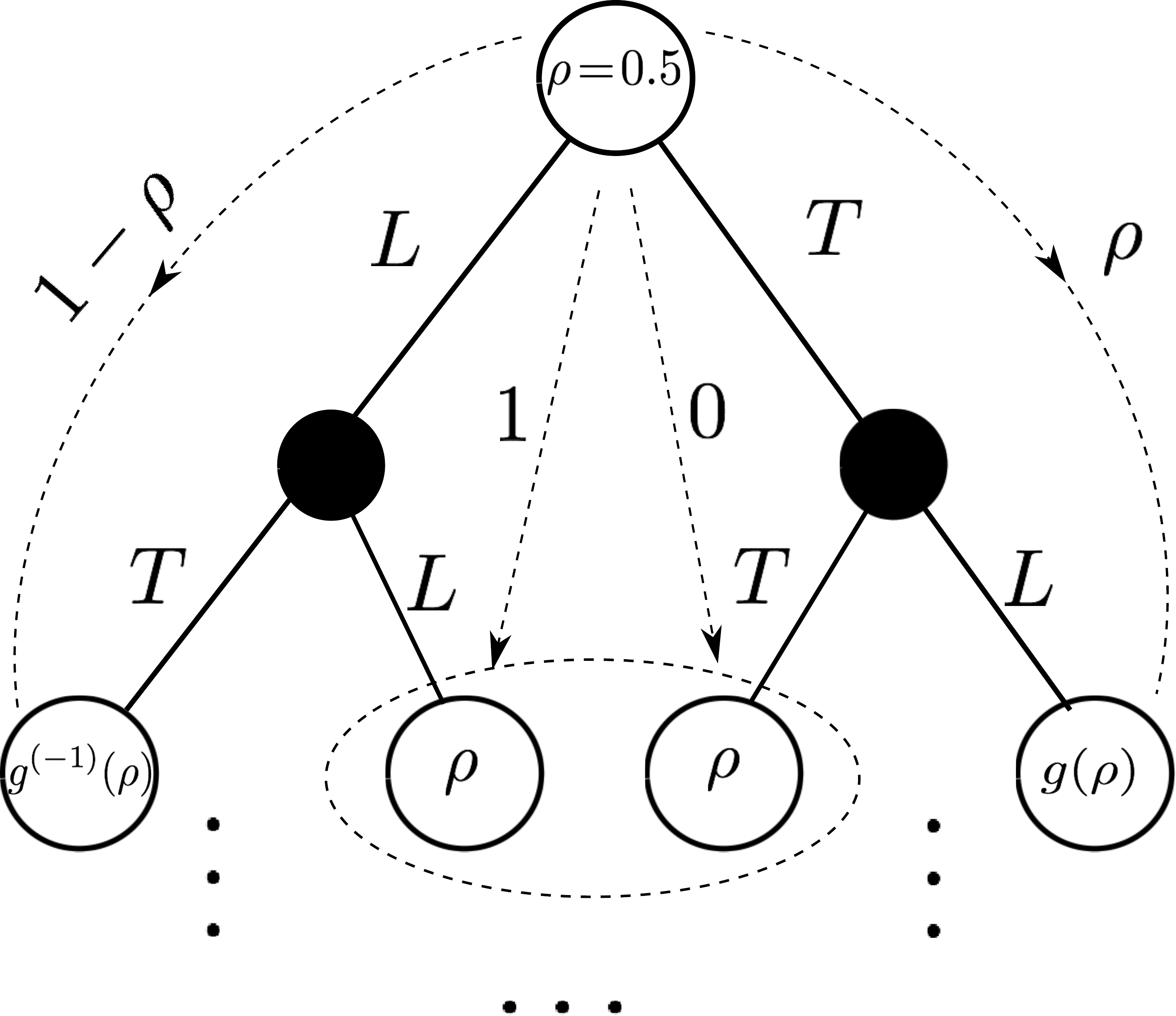}
\end{center}
\caption{Illustration of the first level (root) and the second level of the decision tree. The top actions connecting the root to the intermediate black circles correspond to the honest expert's decisions. The bottom actions connecting the black circles to the second level of the tree correspond to the malicious expert's decisions. Although the second level originally has four nodes, however, two of them can be grouped and be reduced to only three states. The weights on the dashed paths simply denote the loss of the system by following that path.}
\label{fig:tree}
\end{figure}

\subsection{A Generalization to Multiple Experts}
Here we provide a generalization of the problem to the case of many honest experts and one adversary. Without loss of generality, we again assume that the malicious expert is expert $1$ and that all the other experts $i\in\{2,\ldots,K\}$ are honest who make a correct prediction with different probabilities $\mu_i$ (the accuracy of expert $i$). That is,
\begin{align}\nonumber
  	x^i_k=\left\{
		\begin{array}{ll}
		y_k &\text{w.p. } \quad\mu_i,\\
		1 - y_k  &\text{w.p. } \quad 1-\mu_i. 
		\end{array} \right.
\end{align}
We assume that at round $k$, expert $1$ knows the true outcome $y_k$, the accuracy of the honest experts, and the whole history of predictions up to round $k-1$, i.e., $\{\tilde{p}^j_{\ell}, x^{j}_{\ell}, y_{\ell}:  \forall \ell\in[k-1], j\in[K]\}$. Given this information set, the goal of the online malicious expert is to produce a sequence of predictions $\{x^{1}_k\}_{k=1}^{N}$ over a fixed finite horizon $N$ in order to maximize the expected accumulated loss of the system,
\begin{align}\nonumber
\max\limits_{x^1_1,\ldots,x^1_K} \mathbb{E}_{x^i_k, k\in[N], i\neq 1}\big[\sum\limits_{k=1}^N l(\hat{y}_k,y_k)\big], 
\end{align}
where the expectation is taken over the randomization of all the honest experts' predictions $\{x^i_k, k\in[N], i\neq 1\}$. We let $\small{\vec{p}_k = (p^1_k,p^2_k,...,p^K_k)}$ be the {\textit{state}} or {\textit{weight}} vector of all experts at round $k$ and $\small{\vec{\tilde{p}}_k}=(\tilde{p}^1_{k},\ldots,\tilde{p}^K_{k})$ be the corresponding normalized weight vector where $\tilde{p}^i_{k}=\frac{p^i_{k}}{\sum_{i\in [K]} p^i_{k}},\  i\in[K]$ is the normalized weight of expert $i$. Note that we always have $\sum_{i\in [K]} \tilde{p}^i_{k} = 1, \forall k$. Moreover, we let $\phi_{x_k^1}(\vec{p}_{k})$ be the state transition at stage $k$ given that the online malicious expert takes the action $x_k^1\in\{0,1\}$, i.e., 
\begin{align}\nonumber
\phi_{x_k^1}(\vec{p}_k)=\big(p^1_{k+1} ,\ldots,p^K_{k+1}\big),
\end{align}
where $p^i_{k+1} = p^i_k \epsilon^{l(x^i_k,y_k)}$. In addition, the learning algorithm predicts $\hat{y}_k$ at time $k$ by 
\begin{align}\nonumber
  	\small{\hat{y}_{k}
  	=\frac{\sum_{i\in [K]} p^i_{k}x^i_{k}}{\sum_{i\in [K]} p^i_{k}}}=\sum_{i\in[K]}\tilde{p}^i_kx^i_k. 
\end{align}
We note that for $K=2$, this generalized setting coincides with that given in Section \ref{sec:model}. Although characterizing the structure of the optimal online policy in the generalized setting can be very complicated, in the next section we provide some numerical experiments to study the behavior of the optimal policy with multiple honest experts.

\section{Simulations} \label{sec:simulation}

Performance of the false, ratio, and optimal online policies has been simulated numerically and compared in Figure \ref{fig:performance}. In addition to offline and online policies, the case of no adversary with two identical honest experts has also been simulated in order to show the effect of an adversary in the system. 

As can be seen in all plots, an adversary simply adopting the false policy incurs extra loss compared to a system where a malicious expert is not present. The ratio policy imposes more loss than the false policy when the number of stages is large. The optimal online policy imposes a strictly greater loss than the optimal offline policy. Moreover, as the number of stages increases, the gap between the loss of the optimal online policy and either of the offline policies also increases. Similarly, the gain of the bonus term $B$, which is the difference between the curves of the false policy and the ratio policy, increases as the number of stages increases. In fact, it can be observed that the ratio policy closely mimics the structure of the optimal offline policy. For instance, in the middle subfigures of Figure \ref{fig:performance}, the optimal offline expected loss for several values of $N=10,12,14,16,18,20,22,24,26$ are plotted using black dots. As these values are very close to the expected loss of the ratio policy and even coincide in certain cases (e.g., $N=10,14,16$), we believe that the optimal offline policy for finite $N$ belongs to the class of ratio policies, given that one could properly round the block lengths using the CDF of normal distribution.
\begin{figure*}[t]
\begin{center}
\vspace{-0.75cm}
\includegraphics[width=\textwidth]{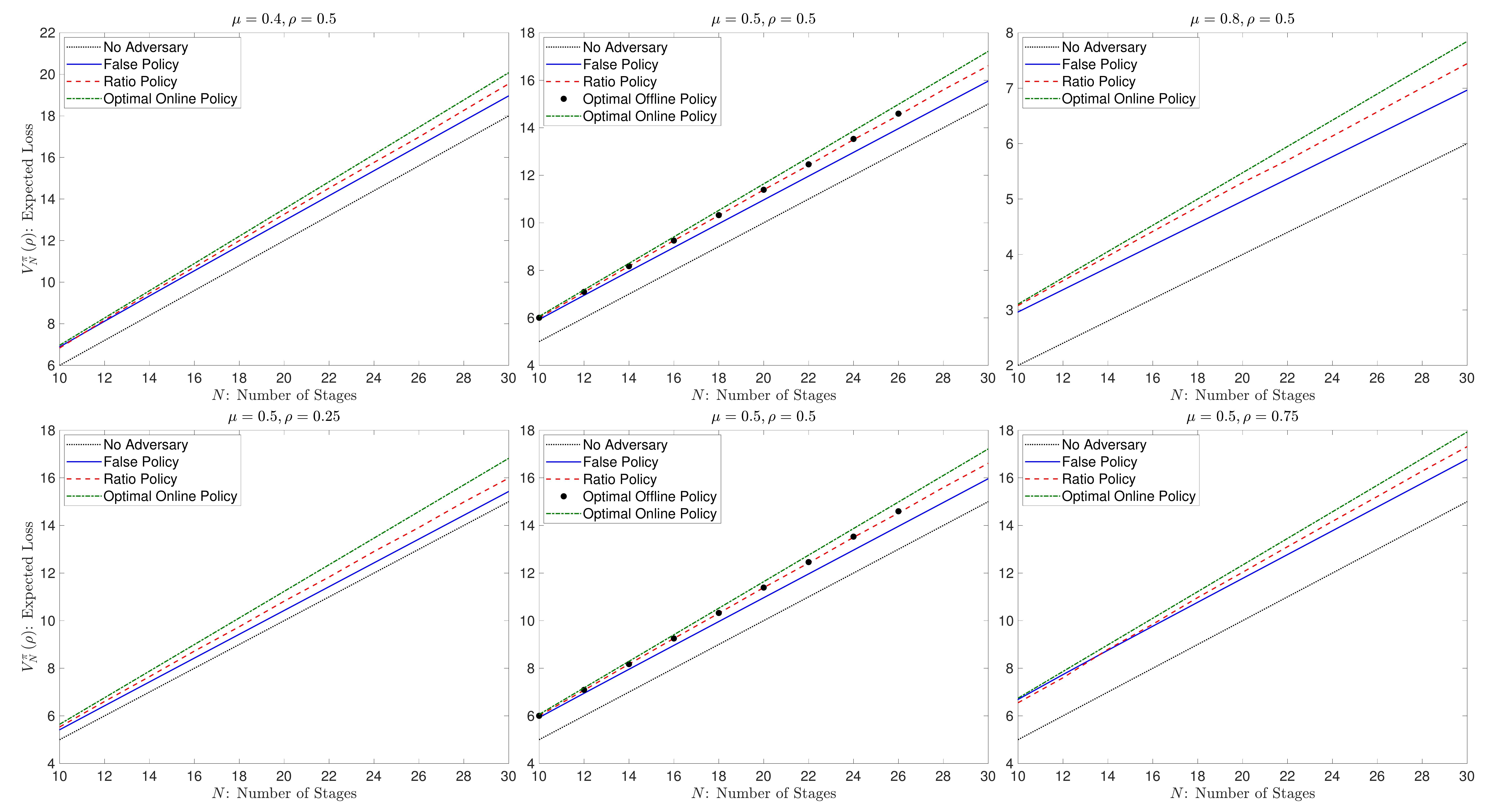}
\vspace{-0.5cm}
\caption{Performance comparison of the false, ratio and optimal online policies for different accuracies and initial relative weights. In all figures we set $\epsilon=\frac{1}{e}$. }\label{fig:performance}
\end{center}
\end{figure*}

Finally, using the generalization to multiple experts in Section \ref{sec:generalization}, a system with four honest experts and one adversary is simulated and compared to a system with one honest expert and one malicious expert. In the 5-expert model, all experts have identical initial weights. In the 2-expert model, the adversary's initial relative weight is $\rho=0.2$, which is the same as that in the 5-expert model. The accuracy of the honest expert in the 2-expert model is the mean of the four honest experts' accuracy in the 5-expert model. Two cases are considered: the homogeneous case (accuracies of all honest experts are identical, $\mu_2=\mu_3=\mu_2=\mu_5=0.5$) and the heterogeneous case (accuracies of honest experts are distinct, $\mu_2=0.3,\mu_3=0.4,\mu_4=0.6,\mu_5=0.7$). The mean of the accuracies of honest experts is the same for the two cases. Expected loss for the 5-expert model is estimated as follows. In each play, a sequence of actions $\{0,1\}^{N}$ is randomly generated for each honest expert $i \in \{2,3,4,5\}$ according to his accuracy $\mu_i$, and the adversary chooses his optimal policy against the honest experts' strategies. This process is repeated 100 times, and the expected loss is approximated by the empirical mean of the losses for all the 100 plays. 

Numerical results are shown in Figure \ref{fig:multi_experts}. As the curve for the homogeneous 5-expert model is very close to that for the 2-expert model, it suggests that the 2-expert model can well approximate the system with multiple homogeneous honest experts by replacing all honest experts with a single one of combined relative weight and the same accuracy. The difference between the curves for the 2-expert model and the heterogeneous 5-expert model is greater, probably because the optimal online policy in the generalized heterogeneous setting is difficult to be approximated by only 2-experts. 
\begin{figure}[t]
\begin{center}
\includegraphics[width=.5\textwidth]{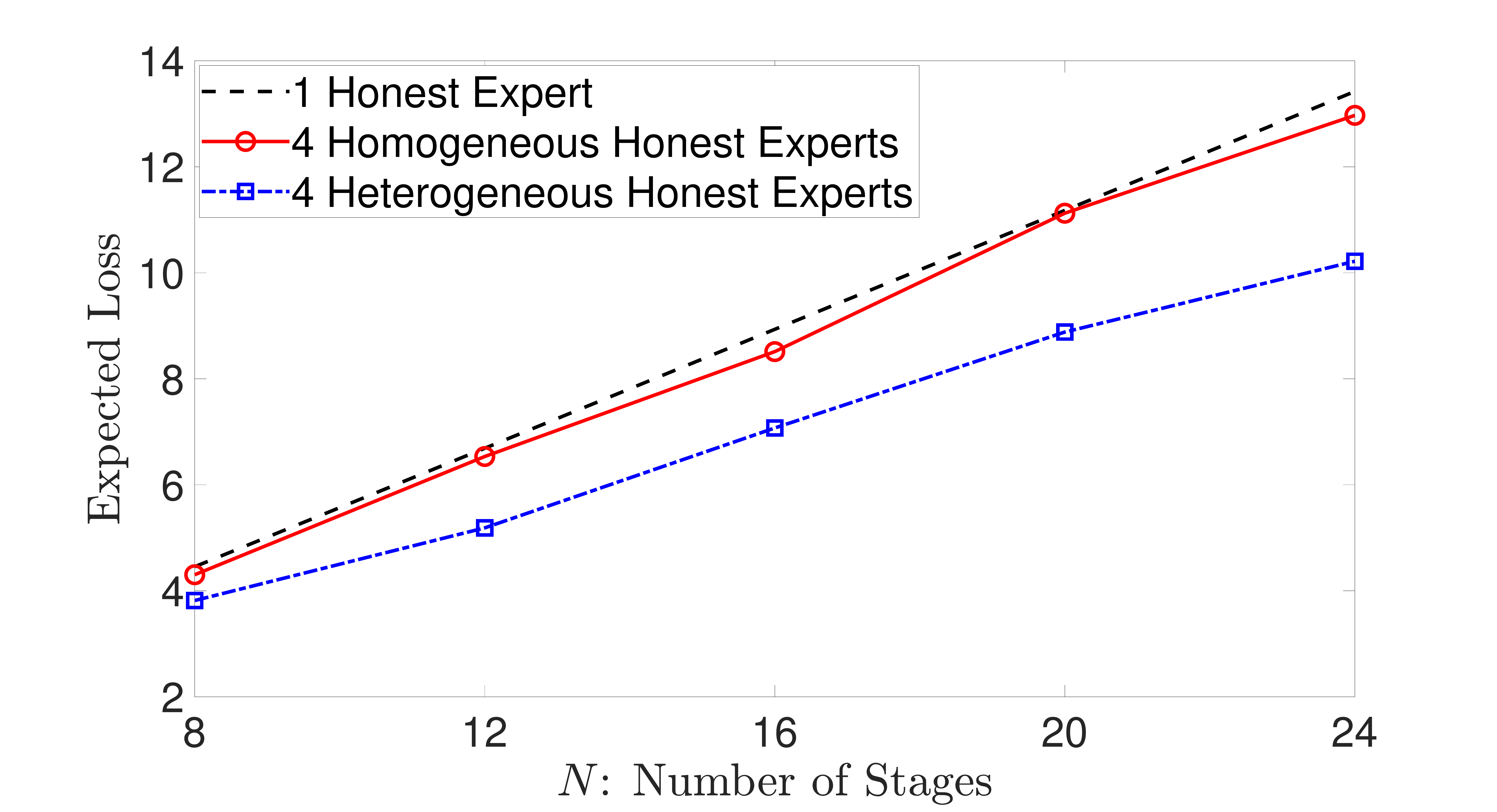}
\end{center}
\caption{Comparison of the online policies for 2-expert and 5-expert models. In the 2-expert model, $\mu=0.5, \rho=0.2$. In the homogeneous 5-expert model, $\mu_2=\mu_3=\mu_2=\mu_5=0.5, \rho=0.2$; in the heterogeneous 5-expert model, $\mu_2=0.3,\mu_3=0.4,\mu_4=0.6,\mu_5=0.7, \rho=0.2$. $\epsilon=\frac{1}{e}$. }
\label{fig:multi_experts}
\end{figure}

\section{Conclusions and Future Directions}\label{sec:conclusion}

In this paper, we considered an adversarial learning system with two experts of whom one is malicious. The malicious expert aims to impose the maximum loss on the system by strategically reporting false predictions. We analyzed the optimal policy for the malicious expert under both offline and online settings. In the offline setting, we showed that finding the optimal policy for the adversary is essentially a hard discrete optimization problem whose solution can be approximated within a negligible (sub-linear) additive term. In particular, we provided a more refined policy that closely mimics the behavior of the optimal offline policy.  We then considered the optimal policy for the online malicious expert and showed that it can be efficiently computed using a dynamic program, and generalized the setting to multiple experts.  

This work opens many interesting directions for future research. It would be interesting to see whether the structure of the optimal policy for the online adversary can be characterized in a closed-form. One possible direction is to leverage the dynamic recursion \eqref{eq:recursion-DP} to show that the optimal value function possesses some nice properties such as convexity, which allows us to prove a class of optimal threshold policies for the online adversary \cite{bertsekas1995dynamic}. Another interesting direction is to use learning schemes other than the MW algorithm (e.g., upper confidence bound algorithm) as the underlying learning scheme and study their robustness against adversarial attacks. When there are many experts in the system can we use mean-field approximation as an effective tool to approximate the optimal policies? Finally, it is interesting to study the game-theoretic version of this work in which the MW learning system can be strategic and not only penalizes the malicious expert but also detects and eliminates it from the system.

\bibliographystyle{IEEEtran}
\bibliography{example_paper}

% Generated by IEEEtran.bst, version: 1.14 (2015/08/26)
\begin{thebibliography}{10}
\providecommand{\url}[1]{#1}
\csname url@samestyle\endcsname
\providecommand{\newblock}{\relax}
\providecommand{\bibinfo}[2]{#2}
\providecommand{\BIBentrySTDinterwordspacing}{\spaceskip=0pt\relax}
\providecommand{\BIBentryALTinterwordstretchfactor}{4}
\providecommand{\BIBentryALTinterwordspacing}{\spaceskip=\fontdimen2\font plus
\BIBentryALTinterwordstretchfactor\fontdimen3\font minus
  \fontdimen4\font\relax}
\providecommand{\BIBforeignlanguage}[2]{{%
\expandafter\ifx\csname l@#1\endcsname\relax
\typeout{** WARNING: IEEEtran.bst: No hyphenation pattern has been}%
\typeout{** loaded for the language `#1'. Using the pattern for}%
\typeout{** the default language instead.}%
\else
\language=\csname l@#1\endcsname
\fi
#2}}
\providecommand{\BIBdecl}{\relax}
\BIBdecl

\bibitem{vovk1990aggregating}
V.~G. Vovk, ``Aggregating strategies,'' in \emph{Proc. Third Workshop on
  Computational Learning Theory}.\hskip 1em plus 0.5em minus 0.4em\relax Morgan
  Kaufmann, 1990, pp. 371--383.

\bibitem{kleinberg2010regret}
R.~Kleinberg, A.~Niculescu-Mizil, and Y.~Sharma, ``Regret bounds for sleeping
  experts and bandits,'' \emph{Machine Learning}, vol.~80, no. 2-3, pp.
  245--272, 2010.

\bibitem{cesa1997use}
N.~Cesa-Bianchi, Y.~Freund, D.~Haussler, D.~P. Helmbold, R.~E. Schapire, and
  M.~K. Warmuth, ``How to use expert advice,'' \emph{Journal of the ACM
  (JACM)}, vol.~44, no.~3, pp. 427--485, 1997.

\bibitem{haussler1995tight}
D.~Haussler, J.~Kivinen, and M.~K. Warmuth, ``Tight worst-case loss bounds for
  predicting with expert advice,'' in \emph{Computational Learning
  Theory}.\hskip 1em plus 0.5em minus 0.4em\relax Springer, 1995, pp. 69--83.

\bibitem{koolen2013pareto}
W.~M. Koolen, ``The pareto regret frontier,'' in \emph{Advances in Neural
  Information Processing Systems}, 2013, pp. 863--871.

\bibitem{yu2009dsybil}
H.~Yu, C.~Shi, M.~Kaminsky, P.~B. Gibbons, and F.~Xiao, ``Dsybil: Optimal
  sybil-resistance for recommendation systems,'' in \emph{30th IEEE Symposium
  on Security and Privacy}.\hskip 1em plus 0.5em minus 0.4em\relax IEEE, 2009,
  pp. 283--298.

\bibitem{etesami2018influence}
S.~R. Etesami, S.~Bolouki, A.~Nedi{\'c}, T.~Ba{\c{s}}ar, and H.~V. Poor,
  ``Influence of conformist and manipulative behaviors on public opinion,''
  \emph{IEEE Transactions on Control of Network Systems}, vol.~6, no.~1, pp.
  202--214, 2018.

\bibitem{truong2018optimal}
A.~Truong, S.~R. Etesami, J.~Etesami, and N.~Kiyavash, ``Optimal attack
  strategies against predictors-{L}earning from expert advice,'' \emph{IEEE
  Transactions on Information Forensics and Security}, vol.~13, no.~1, pp.
  6--19, 2018.

\bibitem{littlestone1994weighted}
N.~Littlestone and M.~K. Warmuth, ``The weighted majority algorithm,''
  \emph{Information and Computation}, vol. 108, no.~2, pp. 212--261, 1994.

\bibitem{cesa2006prediction}
N.~Cesa-Bianchi and G.~Lugosi, \emph{Prediction, {L}earning, and
  {G}ames}.\hskip 1em plus 0.5em minus 0.4em\relax Cambridge University Press,
  2006.

\bibitem{truong2013optimal}
A.~Truong and N.~Kiyavash, ``Optimal adversarial strategies in learning with
  expert advice,'' in \emph{52nd Annual Conference on Decision and Control
  (CDC)}.\hskip 1em plus 0.5em minus 0.4em\relax IEEE, 2013, pp. 7315--7320.

\bibitem{huang2011adversarial}
L.~Huang, A.~D. Joseph, B.~Nelson, B.~I. Rubinstein, and J.~Tygar,
  ``Adversarial machine learning,'' in \emph{Proceedings of the 4th ACM
  Workshop on Security and Artificial Intelligence}.\hskip 1em plus 0.5em minus
  0.4em\relax ACM, 2011, pp. 43--58.

\bibitem{tran2009sybil}
D.~N. Tran, B.~Min, J.~Li, and L.~Subramanian, ``Sybil-resilient online content
  voting,'' in \emph{NSDI}, vol.~9, no.~1, 2009, pp. 15--28.

\bibitem{newsome2006paragraph}
J.~Newsome, B.~Karp, and D.~Song, ``Paragraph: {T}hwarting signature learning
  by training maliciously,'' in \emph{International Workshop on Recent Advances
  in Intrusion Detection}.\hskip 1em plus 0.5em minus 0.4em\relax Springer,
  2006, pp. 81--105.

\bibitem{douceur2002sybil}
J.~R. Douceur, ``The sybil attack,'' in \emph{International Workshop on
  Peer-to-Peer Systems}.\hskip 1em plus 0.5em minus 0.4em\relax Springer, 2002,
  pp. 251--260.

\bibitem{cover1965behavior}
T.~M. Cover, ``Behavior of sequential predictors of binary sequences,''
  \emph{In Proceedings of the 4th Prague Conference on Information Theory,
  Statistical Decesion Fundation, Random Processes}, pp. 263--272, 1965.

\bibitem{gravin2016towards}
N.~Gravin, Y.~Peres, and B.~Sivan, ``Towards optimal algorithms for prediction
  with expert advice,'' in \emph{Proceedings of the Twenty-Seventh Annual
  ACM-SIAM Symposium on Discrete Algorithms}.\hskip 1em plus 0.5em minus
  0.4em\relax SIAM, 2016, pp. 528--547.

\bibitem{gravin2017tight}
------, ``Tight lower bounds for multiplicative weights algorithmic families,''
  in \emph{LIPIcs-Leibniz International Proceedings in Informatics},
  vol.~80.\hskip 1em plus 0.5em minus 0.4em\relax Schloss
  Dagstuhl-Leibniz-Zentrum fuer Informatik, 2017.

\bibitem{auer2002bandit}
P.~Auer, N.~Cesa-Bianchi, Y.~Freund, and R.~E. Schapire, ``The nonstochastic
  multiarmed bandit problem,'' \emph{SIAM Journal on Computing}, vol.~32,
  no.~1, pp. 48--77, 2002.

\bibitem{lattimore2020introbook}
T.~Lattimore and C.~Szepesvári, \emph{Bandit Algorithms}.\hskip 1em plus 0.5em
  minus 0.4em\relax Cambridge University Press, 2020.

\bibitem{bayraktar2020bound}
E.~Bayraktar, H.~V. Poor, and X.~Zhang, ``Malicious experts versus the
  multiplicative weights algorithm in online prediction,''
  \emph{arXiv:2003.08457}, 2020.

\bibitem{bertsekas1995dynamic}
D.~P. Bertsekas, \emph{Dynamic {P}rogramming and {O}ptimal {C}ontrol}.\hskip
  1em plus 0.5em minus 0.4em\relax Belmont, MA: Athena Scientific, 1995,
  vol.~1, no.~2.

\bibitem{berry1941accuracy}
A.~C. Berry, ``The accuracy of the {G}aussian approximation to the sum of
  independent variates,'' \emph{Transactions of the American Mathematical
  Society}, vol.~49, no.~1, pp. 122--136, 1941.

\end{thebibliography}

\appendices

\section{Auxiliary Lemmas}\label{ap-static-preliminary-proofs}

\begin{proposition}\label{prop-no-inf}
The optimal online strategy for an adversary with no information about an arbitrary sequence of true outcomes $\{y_k\}$ is to choose $x_k^1\in\{0,1\}$ with probability $\frac{1}{2}$ and independently for every $k\in [N]$.
\end{proposition}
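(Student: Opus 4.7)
The plan is a short minimax symmetry argument built on top of Lemma \ref{lemm:value}. Since the adversary receives no feedback of any kind (neither the outcomes $y_k$ nor the weights or losses), all of his internal randomization can be realized at time $0$, so any online strategy is equivalent to a single distribution $\pi$ over $\{0,1\}^N$ from which $x^1=(x_1^1,\ldots,x_N^1)$ is drawn. For a deterministic outcome sequence $y=(y_1,\ldots,y_N)\in\{0,1\}^N$, I would introduce the bitwise ``lie pattern'' $\Psi:=x^1\oplus y$. Lemma \ref{lemm:value} then gives that the expected system loss depends only on $\Psi$, $\rho$, and $N$, so the adversary's expected loss against $y$ is
\[
L(\pi,y)\;:=\;\mathbb{E}_{x^1\sim\pi}\bigl[V^{x^1\oplus y}_N(\rho)\bigr]\;=\;\sum_{\Psi\in\{0,1\}^N}\pi(\Psi\oplus y)\,V^\Psi_N(\rho).
\]

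I would then handle the two directions separately. \textbf{Achievability:} under the uniform product strategy $\pi_{\mathrm u}(x^1)=2^{-N}$, the shift $\pi_{\mathrm u}(\cdot\oplus y)$ is again uniform for every $y$, so
\[
L(\pi_{\mathrm u},y)\;=\;2^{-N}\sum_{\Psi}V^\Psi_N(\rho)\;=:\;\bar V,
\]
a constant independent of the adversarial outcome sequence. \textbf{Optimality:} for an arbitrary $\pi$, I would average $L(\pi,y)$ over $y\in\{0,1\}^N$, swap the two sums, and use $\sum_y\pi(\Psi\oplus y)=1$ to obtain the identity $2^{-N}\sum_y L(\pi,y)=\bar V$. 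Since the minimum is dominated by the mean,
\[
\min_{y\in\{0,1\}^N} L(\pi,y)\;\le\;\bar V\;=\;L(\pi_{\mathrm u},y),
\]
so no strategy can strictly beat $\pi_{\mathrm u}$ against the worst-case outcome sequence. Combining the two items proves the proposition.

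The only conceptually delicate point, and the only place I expect any friction, is the reduction from general (possibly adaptive) online strategies to a fixed prior $\pi$ on $\{0,1\}^N$: because the adversary's filtration is generated purely by exogenous coin flips, an adaptive strategy is only formally richer than a nonadaptive one, and randomizing at time $0$ is without loss of generality. Once that is in hand, the remainder is essentially a one-line convolution identity on the group $\mathbb{Z}_2^N$ together with the ``loss depends only on the lie pattern'' content of Lemma \ref{lemm:value}.
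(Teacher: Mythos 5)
Your route is genuinely different from the paper's: the paper argues stage by stage, computing the conditional one-step expected loss ($1-\mu+\mu\rho-q\rho$ for $y_k=0$ versus $1-\mu+\mu\rho-(1-q)\rho$ for $y_k=1$) and equalizing at $q=\tfrac12$, whereas you run a global symmetry/averaging argument on $\{0,1\}^N$. Your achievability step and the convolution identity for \emph{nonadaptive} mixtures are correct, and in that regime your argument is arguably cleaner than the paper's local equalization because it never has to worry about how the current action influences future weights.

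The gap is exactly where you flagged it, and the justification you offer for that step does not hold as stated. The adversary's filtration is not payoff-irrelevant noise: it contains the honest expert's past predictions $x^2_{1:k-1}$, equivalently the current relative weight $\tilde p^1_k$, and an adaptive strategy may correlate $x^1_k$ with these. For a fixed $y$ such a strategy induces a joint law on $(x^1,x^2)$ that is \emph{not} of the form ``$x^1\sim\pi$ independent of $x^2$,'' so it is not a time-$0$ mixture of deterministic sequences and your identity $L(\pi,y)=\sum_{\Psi}\pi(\Psi\oplus y)V^{\Psi}_N(\rho)$ does not apply to it; that conditioning on the weight can genuinely matter is precisely what the full-information online analysis of Theorem \ref{thm:online-two} exploits. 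The repair is to run the averaging step directly on adaptive strategies instead of reducing them away: since $\hat y_k-y_k=\tilde p^1_k(x^1_k-y_k)+\tilde p^2_k(x^2_k-y_k)$ with both summands of the same sign, one has $|\hat y_k-y_k|=\tilde p^1_k\,|x^1_k-y_k|+(1-\tilde p^1_k)\,|x^2_k-y_k|$, and the entire accumulated loss is a functional of the two lie-indicator processes $(|x^1_\ell-y_\ell|,|x^2_\ell-y_\ell|)_\ell$ alone (the weight $\tilde p^1_k$ is determined by their past). When $y$ is i.i.d.\ uniform and independent of all coins, $|x^1_k-y_k|$ is Bernoulli$(\tfrac12)$ conditionally on the history for \emph{any} strategy (because $x^1_k$ is measurable with respect to the history and private coins independent of $y_k$), while $|x^2_k-y_k|$ is Bernoulli$(1-\mu)$ independent of everything; hence the lie-indicator process has the same law as under the uniform product strategy, the $y$-averaged loss equals $\bar V$ for every adaptive strategy, and $\min_y L(\sigma,y)\le\bar V=L(\pi_{\mathrm u},y)$ follows. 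With that substitution your proof is complete.
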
 
\begin{proof}
   Let us fix an arbitrary stage $k$. The expected loss incurred in stage $k$ is the conditional expectation given the entire history of outcomes and predictions up to stage $k-1$ taken over the past and current actions of the honest expert: 
    \begin{align}\nonumber
        \mathbb{E}_{x^2_1,\dots,x^2_k}[l(\hat{y}_k,y_k)] 
        \equiv \mathbb{E}_{x^2_1,\dots,x^2_k}[l(\hat{y}_k,y_k)|\{\tilde{p}^1_l,x^1_l,x^2_l,y_l\}_{l=1}^{k-1}]. 
    \end{align}
As $y_k$ can be chosen arbitrarily, and the honest expert's prediction at stage $k$ is independent of the previous stages, the history of predictions up to stage $k-1$ cannot give any information to the adversary about $y_k$. As a result, we have 
    \begin{align}\label{eq:uncondexpec}
        \mathbb{E}_{x^2_1,\dots,x^2_k}[l(\hat{y}_k,y_k)]= \mathbb{E}_{x^2_k}[l(\hat{y}_k,y_k)].
    \end{align}
Therefore, in this case, the adversary becomes memoryless and treats every stage as a new restart. Now for the absolute loss function $l(\hat{y},y)=|\hat{y}-y|$, one can compute the expected loss in a closed form for stage $k$ using \eqref{eq:uncondexpec}. Let us assume that the adversary chooses $x^1_k=0$ with probability $q$ and chooses $x^1_k=1$ with probability $1-q$. Depending on the true outcome $y_k$, the expected loss equals one of the following terms: 
    \begin{align}\nonumber
        &\mathbb{E}_{x^2_k}[l(\hat{y}_k,y_k)|y_k=0]= 1-\mu+\mu\rho-q\rho, \cr 
        &\mathbb{E}_{x^2_k}[l(\hat{y}_k,y_k)|y_k=1]= 1-\mu+\mu\rho-(1-q)\rho, 
    \end{align}
where $\rho$ denotes the relative weight of the adversary at the beginning of stage $k$. Since the adversary has no information about whether $y_k=0$ or $y_k=1$, it must choose $q$ to maximize the minimum of the above two expressions. Now an easy calculation shows that for $q=\frac{1}{2}$, the above two equations coincide, which shows that predicting with probability $\frac{1}{2}$ at each stage is the optimal online strategy.
\end{proof}

\begin{lemma}\label{lemm:appendix}
Let $f(r):=r-\ln(1+ae^r), h(r):=\ln(a+e^r)$,
\begin{align}\nonumber
&\epsilon(r):=f(r+1)-f(r)-g^{(r)}(\rho),\cr 
&\delta(r):=h(r+1)-h(r)-g^{(-r)}(\rho).
\end{align}
Then for any $r\ge 0$ we have, 
\begin{align}\nonumber
& 0 \geq \epsilon(r)\geq \frac{1}{1+ae^{r+1}}-\frac{1}{1+ae^{r}}, \cr 
&0\leq \delta(r)\leq \frac{1}{1+ae^{-(r+1)}}-\frac{1}{1+ae^{-r}},
\end{align}
where we recall that $a=\frac{1}{\rho}-1$, for some $\rho\in (0,1)$.
\end{lemma}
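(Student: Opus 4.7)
The plan is to identify $f$ and $h$ as antiderivatives of $g^{(r)}(\rho)$ and $g^{(-r)}(\rho)$ respectively (in the variable $r$), rewrite the forward differences as integrals via the fundamental theorem of calculus, and then bound those integrals by the values of the integrand at the endpoints using monotonicity. This avoids any case analysis and keeps both halves of the lemma symmetric.

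First I would compute the derivatives. Specializing to $\epsilon=1/e$ (as in the proof of Theorem~\ref{thm:main}), the composition rule \eqref{eq:composition-rule} gives $g^{(r)}(\rho)=\frac{1}{1+ae^{r}}$ and $g^{(-r)}(\rho)=\frac{1}{1+ae^{-r}}$ with $a=\frac{1}{\rho}-1$. A direct differentiation yields
\begin{align*}
f'(r)=1-\frac{ae^{r}}{1+ae^{r}}=\frac{1}{1+ae^{r}}=g^{(r)}(\rho),\qquad
h'(r)=\frac{e^{r}}{a+e^{r}}=\frac{1}{1+ae^{-r}}=g^{(-r)}(\rho),
\end{align*}
so by the fundamental theorem of calculus,
\begin{align*}
\epsilon(r)=\int_{r}^{r+1}\frac{ds}{1+ae^{s}}-\frac{1}{1+ae^{r}},\qquad
\delta(r)=\int_{r}^{r+1}\frac{ds}{1+ae^{-s}}-\frac{1}{1+ae^{-r}}.
\end{align*}

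The remaining step is pure monotonicity. For the $\epsilon(r)$ bounds, the integrand $s\mapsto\frac{1}{1+ae^{s}}$ is strictly decreasing, so on $[r,r+1]$ it lies between $\frac{1}{1+ae^{r+1}}$ and $\frac{1}{1+ae^{r}}$; integrating and subtracting $\frac{1}{1+ae^{r}}$ gives exactly $\frac{1}{1+ae^{r+1}}-\frac{1}{1+ae^{r}}\le \epsilon(r)\le 0$. For $\delta(r)$, the integrand $s\mapsto\frac{1}{1+ae^{-s}}$ is strictly increasing on $[r,r+1]$, so it lies between $\frac{1}{1+ae^{-r}}$ and $\frac{1}{1+ae^{-(r+1)}}$; the same reasoning yields $0\le\delta(r)\le\frac{1}{1+ae^{-(r+1)}}-\frac{1}{1+ae^{-r}}$.

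There is no real obstacle here. The only thing to check carefully is the identification $f'=g^{(r)}(\rho)$ and $h'=g^{(-r)}(\rho)$, which is the conceptual point that makes the argument work (and is presumably why the authors chose this particular $f$ and $h$ in the proof of Theorem~\ref{thm:main}). Everything else is a one-line monotonicity estimate on an integral of length one. For completeness I would note that for general $\epsilon\in(0,1)$ one simply replaces $e$ by $1/\epsilon$ throughout, since the identities $g^{(r)}(\rho)=\frac{1}{1+a(1/\epsilon)^{r}}$ and $g^{(-r)}(\rho)=\frac{1}{1+a\epsilon^{r}}$ lead to antiderivatives of the same form (up to the base of the exponential) and the monotonicity argument is unchanged.
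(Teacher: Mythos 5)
Your proof is correct, and it is a genuinely cleaner route than the one in the paper for half of the claims. The paper splits each chain of inequalities into two separate arguments: to show $\epsilon(r)\le 0$ (and $\delta(r)\ge 0$) it differentiates $\epsilon(r)$ and $\delta(r)$, locates the unique critical point (e.g.\ $e^{r}=\tfrac{e-2}{a}$), and checks the sign at that point and at the boundary of the domain; to get the nontrivial bounds it then invokes the Mean-value Theorem, writing $f(r+1)-f(r)=f'(\eta_r)$ with $f'(\eta_r)=\tfrac{1}{1+ae^{\eta_r}}>\tfrac{1}{1+ae^{r+1}}$ and similarly for $h$. Your observation that $f'=g^{(r)}(\rho)$ and $h'=g^{(-r)}(\rho)$, so that $\epsilon(r)$ and $\delta(r)$ are each an integral of a monotone function over a unit interval minus one of its endpoint values, delivers \emph{both} inequalities of each chain in a single step and makes the critical-point analysis unnecessary (the MVT step in the paper is essentially your integral estimate in disguise). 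What the paper's computation buys that yours does not is the explicit extremal value of $\epsilon(r)$, namely $\tfrac{e-2}{e-1}-\ln(e-1)$ at $e^{r}=\tfrac{e-2}{a}$, but this is not needed anywhere in the argument. Two minor points worth making explicit if you write this up: the strict monotonicity of the integrands requires $a>0$, which holds since $\rho\in(0,1)$ (at $a=0$ everything degenerates to equalities and the weak inequalities still hold); and, as you note, the lemma as stated has already specialized to $\epsilon=1/e$, consistent with the normalization made at the start of the proof of Theorem~\ref{thm:main}.
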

\begin{proof}
Since $\frac{d}{dr}\epsilon(r)=\frac{ae^{r}(ae^r-e+2)}{(1+ae^r)^2(1+ae^{r+1})}$ has only one root given by $e^r=\frac{e-2}{a}$, by evaluating $\epsilon(r)$ in the root of its derivative as well as the boundary of its domain we get,
\begin{align}\nonumber
\epsilon(r)=\begin{cases} \frac{e-2}{e-1}-\ln(e-1)<0 & \mbox{if } \  e^r=\frac{e-2}{a}, \\
1+\ln(\frac{1+a}{1+ae})-\frac{1}{a+1}\leq 0  & \mbox{if}, \  r=0, \\ 
0 & \mbox{if } \  \substack{a=0, \ \mbox{or} \ r\to \infty, \\ \mbox{or} \ a\to \infty}. \end{cases}
\end{align}
This shows that $\epsilon(r)\leq 0,\ \forall r, a\in [0,\infty)$. On the other hand, for every $r>0$, using the Mean-value Theorem we have $f(r+1)-f(r)=f'(\eta_r)$, for some $\eta_r\in [r, r+1]$. Since $f'(\eta_r)=\frac{1}{1+ae^{\eta_r}}>\frac{1}{1+ae^{r+1}}$, using \eqref{eq:composition-rule} we can write,
\begin{align}\nonumber
\epsilon(r)&=f(r+1)-f(r)-g^{(r)}(\rho)\cr 
&\ge \frac{1}{1+ae^{r+1}}-g^{(r)}(\rho)\cr 
&=\frac{1}{1+ae^{r+1}}-\frac{1}{1+ae^{r}}.
\end{align}

Similarly, $\frac{d}{dr}\delta(r)=-\frac{ae^r(e^r-(e-2)a)}{(a+e^r)^2(a+e^{r+1})}$, which has only one root at $e^r=(e-2)a$. Therefore, by evaluating $\delta(r)$ in the root of its derivative as well as the boundary points one can easily see that $\delta(r)\ge 0$. Again, using the Mean-value Theorem, there exists $\zeta_r\in [r,r+1]$ such that $h(r+1)-h(r)=h'(\zeta_r)=\frac{1}{1+ae^{-\zeta_r}}\leq \frac{1}{1+ae^{-(r+1)}}$. This shows that, 
\begin{align}\nonumber
\delta(r)&\leq \frac{1}{1+ae^{-(r+1)}}-g^{(-r)}(\rho)\cr 
&=\frac{1}{1+ae^{-(r+1)}}-\frac{1}{1+ae^{-r}}.
\end{align}
\end{proof}

\begin{lemma}\label{lemm:log-bound}
Let $m_1,m_2,\ldots,m_k$ be positive integers and define $M_{\ell}:=\sum_{j=1}^{\ell}m_j, \ \ell\in[k]$, (by convention we let $M_0=0$). Then $\sum_{\ell=1}^{k}\frac{m_{\ell}}{\sqrt{M_{\ell}}}= O(\sqrt{M_k\ln M_k})$.
\end{lemma}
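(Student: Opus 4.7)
The plan is to combine Cauchy--Schwarz with a logarithmic telescoping estimate. First, I would split each summand as $\frac{m_\ell}{\sqrt{M_\ell}} = \sqrt{m_\ell}\cdot \frac{\sqrt{m_\ell}}{\sqrt{M_\ell}}$ and apply Cauchy--Schwarz to obtain
\[
\sum_{\ell=1}^{k}\frac{m_\ell}{\sqrt{M_\ell}} \;\leq\; \sqrt{\sum_{\ell=1}^{k}m_\ell}\cdot \sqrt{\sum_{\ell=1}^{k}\frac{m_\ell}{M_\ell}} \;=\; \sqrt{M_k}\cdot \sqrt{\sum_{\ell=1}^{k}\frac{m_\ell}{M_\ell}}.
\]
This reduces the problem to showing $\sum_{\ell=1}^{k}\frac{m_\ell}{M_\ell} = O(\ln M_k)$.

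For the second step, I would use the elementary inequality $1-x \leq -\ln x$ for $x\in (0,1]$. For each $\ell\geq 2$, applying this with $x = M_{\ell-1}/M_\ell$ (which is strictly positive since $M_{\ell-1}\geq m_1 \geq 1$) yields
\[
\frac{m_\ell}{M_\ell} \;=\; 1 - \frac{M_{\ell-1}}{M_\ell} \;\leq\; \ln\!\left(\frac{M_\ell}{M_{\ell-1}}\right).
\]
The $\ell=1$ term is handled separately: $\frac{m_1}{M_1}=1$ (since $M_1=m_1$). Summing and telescoping gives
\[
\sum_{\ell=1}^{k}\frac{m_\ell}{M_\ell} \;\leq\; 1 + \sum_{\ell=2}^{k}\ln\!\left(\frac{M_\ell}{M_{\ell-1}}\right) \;=\; 1 + \ln\!\left(\frac{M_k}{M_1}\right) \;\leq\; 1 + \ln M_k.
\]
Substituting back into the Cauchy--Schwarz bound yields the claim,
\[
\sum_{\ell=1}^{k}\frac{m_\ell}{\sqrt{M_\ell}} \;\leq\; \sqrt{M_k}\cdot \sqrt{1+\ln M_k} \;=\; O\!\left(\sqrt{M_k\ln M_k}\right).
\]

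There is no real obstacle here; the only subtlety is that the $\ell=1$ term requires separate treatment because $M_0=0$ makes $\ln(M_1/M_0)$ undefined, but this is immediately resolved by noting $m_1/M_1 = 1$. As a side remark, a more direct telescoping argument based on $\sqrt{M_\ell}-\sqrt{M_{\ell-1}} = \frac{m_\ell}{\sqrt{M_\ell}+\sqrt{M_{\ell-1}}} \geq \frac{m_\ell}{2\sqrt{M_\ell}}$ would actually yield the slightly stronger bound $2\sqrt{M_k}$ without the logarithmic factor, but the Cauchy--Schwarz route above already suffices for the stated conclusion and is the cleanest match to the lemma as formulated.
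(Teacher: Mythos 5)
Your proof is correct and follows essentially the same route as the paper: the identical Cauchy--Schwarz reduction to $\sum_{\ell} m_\ell/M_\ell$, followed by the same bound $1+\ln M_k$ on that inner sum (the paper compares each term $m_\ell/M_\ell$ to a block of the harmonic series $\sum_{j=M_{\ell-1}+1}^{M_\ell}1/j$, while you use $1-x\le -\ln x$ and telescope; these are equivalent elementary estimates). Your closing remark is also correct --- telescoping $\sqrt{M_\ell}-\sqrt{M_{\ell-1}}\ge \frac{m_\ell}{2\sqrt{M_\ell}}$ gives the sharper bound $2\sqrt{M_k}$, so the lemma as stated is not tight, although the $O(\sqrt{M_k\ln M_k})$ form suffices for its use in Lemma \ref{lemm:main}.
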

\begin{proof}
Starting from the left-hand side and using Cauchy-Schwarz inequality, we have,
\begin{align}\label{eq:cauchy}
\sum_{\ell=1}^{k}\frac{m_{\ell}}{\sqrt{M_{\ell}}}&=\sum_{\ell=1}^{k}\sqrt{m_{\ell}}\times \sqrt{\frac{m_{\ell}}{M_{\ell}}}\leq \sqrt{M_{k}}\sqrt{\sum_{\ell=1}^{k} \frac{m_{\ell}}{M_{\ell}}}.
\end{align}
Next for every $\ell$, we can write 
\begin{align}\nonumber
\frac{m_{\ell}}{M_{\ell}}&=\underbrace{\frac{1}{m_{\ell}\!+\!M_{\ell-1}}\!+\!\frac{1}{m_{\ell}\!+\!M_{\ell-1}}\!+\!\ldots\!+\!\frac{1}{m_{\ell}\!+\!M_{\ell-1}}}_\text{$m_{\ell}$ times}\cr 
&\leq \frac{1}{1\!+\!M_{\ell-1}}\!+\!\frac{1}{2\!+\!M_{\ell-1}}\!+\!\ldots\!+\!\frac{1}{m_{\ell}\!+\!M_{\ell-1}}.
\end{align}
Summing the above relation for all $\ell=1,\ldots,k$, we get 
\begin{align}\nonumber
\sum_{\ell=1}^{k} \frac{m_{\ell}}{M_{\ell}}&\leq \sum_{\ell=1}^{k} \frac{1}{1\!+\!M_{\ell-1}}\!+\!\frac{1}{2\!+\!M_{\ell-1}}\!+\!\ldots\!+\!\frac{1}{m_{\ell}\!+\!M_{\ell-1}}\cr
&=\sum_{j=1}^{M_k}\frac{1}{j}\leq 1+\ln M_{k}.
\end{align}
Using this relation into \eqref{eq:cauchy} we get the desired bound.
\end{proof}

\begin{lemma}[{\bf Berry-Esseen Theorem \cite{berry1941accuracy}}]\label{lemm:berry}
Let $V_i$ be independent random variables with mean $a_i$ and variance $s^2_i$, and define $S_t:=\sum_{i=1}^{t}V_i$. Then there exists an absolute constant $c_0$ such that for all $t$ the CDF of $S_t$, denoted by $F_t(x)$, satisfies 
\begin{align}\nonumber
\sup_x\Big|F_t(x)-\Phi\Big(\frac{x-\sum_{i=1}^{t}a_i}{\sqrt{\sum_{i=1}^{t}s^2_i}}\Big)\Big|\leq \frac{c_0\max_i\{\frac{\mathbb{E}|X_i-a_i|^3}{s^2_i}\}}{\sqrt{\sum_{i=1}^{t}s^2_i}} .
\end{align}
\end{lemma}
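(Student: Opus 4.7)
The plan is to follow the classical characteristic function approach due to Esseen, combining a smoothing inequality with a careful Taylor expansion of the characteristic functions of the summands. Since the statement is scale/shift invariant, I would first reduce to the case $a_i=0$ and $\sigma^2:=\sum_{i=1}^t s_i^2 = 1$ by considering $\tilde V_i := (V_i-a_i)/\sigma$ and $T_t := \sum_{i=1}^t \tilde V_i$. Writing $\rho := \max_i \mathbb{E}|V_i-a_i|^3/s_i^2$ and $\beta_t := \sum_{i=1}^t \mathbb{E}|\tilde V_i|^3 \le \rho/\sigma$, the goal becomes $\sup_x |F_{T_t}(x) - \Phi(x)| \le c_0 \beta_t$ (or equivalently $c_0 \rho/\sigma$, matching the bound in the lemma).

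First, I would establish Esseen's smoothing inequality: for any distribution function $F$ and any differentiable $G$ with $\|G'\|_\infty \le M$,
\begin{equation*}
\sup_x |F(x)-G(x)| \;\le\; \frac{1}{\pi}\int_{-T}^{T}\left|\frac{\hat F(u)-\hat G(u)}{u}\right|\,du \;+\; \frac{24 M}{\pi T},
\end{equation*}
valid for every $T>0$. The proof is standard: convolve $F-G$ with a kernel whose Fourier transform has compact support (a Fejer-type kernel), apply Parseval, and control the convolution error using boundedness of $G'$. I would apply this with $G=\Phi$ (so $M=1/\sqrt{2\pi}$) and $\hat G(u)=e^{-u^2/2}$, and with $T$ of order $1/\beta_t$ to be optimized at the end.

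Next, I would bound $|\hat F_{T_t}(u) - e^{-u^2/2}|$ pointwise for $|u| \le T$. Let $\phi_i(u) := \mathbb{E}[e^{iu\tilde V_i}]$. Using $\mathbb{E}\tilde V_i=0$, $\mathbb{E}\tilde V_i^2 = s_i^2/\sigma^2$, and the third-moment remainder, a Taylor expansion yields
\begin{equation*}
\left|\phi_i(u) - \left(1 - \tfrac{u^2 s_i^2}{2\sigma^2}\right)\right| \le \tfrac{|u|^3}{6}\mathbb{E}|\tilde V_i|^3.
\end{equation*}
Taking logarithms (valid once $|u|$ is small enough that each $\phi_i(u)$ lies in a neighborhood of $1$ where $\log$ is analytic), summing over $i$, and comparing to $-u^2/2$, I would show
\begin{equation*}
\Bigl|\log \hat F_{T_t}(u) + \tfrac{u^2}{2}\Bigr| \;\lesssim\; |u|^3 \beta_t,
\end{equation*}
and then exponentiate using $|e^z-1|\le |z|e^{|z|}$ to conclude $|\hat F_{T_t}(u)-e^{-u^2/2}| \lesssim |u|^3 \beta_t\, e^{-u^2/4}$ on $\{|u| \le c/\beta_t\}$ for a suitable constant $c$. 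Plugging into the smoothing inequality and choosing $T \asymp 1/\beta_t$ balances the two error terms and yields the bound $C \beta_t$ for an absolute $C$.

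The main obstacle is the characteristic-function step: controlling $\log\phi_i(u)$ uniformly in $i$ requires $|u|^2 s_i^2/\sigma^2$ to be small, which is where the assumption on $\max_i \mathbb{E}|V_i-a_i|^3/s_i^2$ (rather than a Lyapunov ratio involving $\sigma$ alone) enters; one has to verify carefully that $|u|\le c/\beta_t$ forces each $\phi_i(u)$ into the branch of $\log$ where the Taylor estimate is legitimate, and that the cubic remainder in the sum survives the exponentiation without blowing up the constant. Optimizing $c_0$ is delicate, but for the qualitative statement needed in the paper (an absolute constant suffices) the argument above gives the claim.
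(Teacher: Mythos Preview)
The paper does not prove this lemma; it is stated as the classical Berry--Esseen theorem with a citation to \cite{berry1941accuracy} and is used only as a black box in the proof of Lemma~\ref{lemm:berry-essen}. Your outline follows the standard characteristic-function approach due to Esseen (smoothing inequality plus a third-order Taylor expansion of each $\log\phi_i$, then optimizing the truncation level $T\asymp 1/\beta_t$), which is correct and is exactly how the result is established in standard references such as Feller or Petrov. Since the paper offers no argument of its own here, there is nothing to compare against; your sketch would constitute a legitimate proof were one required. One minor bookkeeping remark: your inequality $\beta_t=\sum_i\mathbb{E}|\tilde V_i|^3\le \rho/\sigma$ is indeed the place where the $\max_i$ form of the bound in the lemma enters (via $\mathbb{E}|V_i-a_i|^3\le \rho\,s_i^2$ and $\sum_i s_i^2=\sigma^2$), so the final constant matches the stated right-hand side.
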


\begin{lemma}\label{lemm:berry-essen}
Let $X\sim Bin(n,\mu)$ and $Y\sim Bin(m,(1-\mu))$ be two independent Binomial distributions. Then, there exists a constant $c$ such that $\left|\mathbb{E}\big[\frac{1}{1+e^{X-Y}}\big]-\Phi(-\frac{\nu}{\sigma})\right|\leq \frac{c}{\sigma}$, where $\nu=n\mu-(1-\mu)m$, $\sigma^2=\mu(1-\mu)(n+m)$, and $\Phi(\cdot)$ is the CDF of the standard normal distribution.
\end{lemma}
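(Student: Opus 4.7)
The plan is to pass through a Gaussian intermediary. Set $f(z):=1/(1+e^z)$ and let $\tilde{Z}\sim N(\nu,\sigma^2)$, so that $\Phi(-\nu/\sigma)=\mathbb{E}[\mathbf{1}\{\tilde{Z}\le 0\}]$. The triangle inequality
\begin{align}\nonumber
\bigl|\mathbb{E}[f(X-Y)]-\Phi(-\nu/\sigma)\bigr|\le \bigl|\mathbb{E}[f(X-Y)]-\mathbb{E}[f(\tilde{Z})]\bigr|+\bigl|\mathbb{E}[f(\tilde{Z})]-\Phi(-\nu/\sigma)\bigr|
\end{align}
then reduces the task to bounding each of the two pieces by $O(1/\sigma)$.

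For the first piece, I would express $X-Y$ as a sum of $n+m$ independent bounded random variables (the Bernoullis composing $X$ together with the negatives of those composing $Y$); their total mean equals $\nu$, total variance equals $\sigma^2$, and third absolute central moments are uniformly bounded by constants depending only on $\mu$. Lemma~\ref{lemm:berry} then yields $\|F-G\|_\infty\le c_1/\sigma$, where $F$ is the CDF of $X-Y$ and $G(z):=\Phi((z-\nu)/\sigma)$ is the CDF of $\tilde{Z}$. Because $f$ is smooth and strictly decreasing from $1$ to $0$ with $\int_{\mathbb{R}}|f'(z)|\,dz=1$, a Riemann--Stieltjes integration by parts (the boundary terms vanish since $F-G$ and $f$ decay appropriately at $\pm\infty$) gives
\begin{align}\nonumber
\mathbb{E}[f(X-Y)]-\mathbb{E}[f(\tilde{Z})]=-\int_{-\infty}^{\infty}(F(z)-G(z))\,f'(z)\,dz,
\end{align}
whose absolute value is at most $\|F-G\|_\infty\cdot 1 = O(1/\sigma)$.

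For the second piece, writing the difference as an integral of $f(z)-\mathbf{1}\{z\le 0\}$ against the $N(\nu,\sigma^2)$ density and bounding that density uniformly by its maximum $1/(\sigma\sqrt{2\pi})$ gives
\begin{align}\nonumber
\bigl|\mathbb{E}[f(\tilde{Z})]-\Phi(-\nu/\sigma)\bigr|\le \frac{1}{\sigma\sqrt{2\pi}}\int_{-\infty}^{\infty}\bigl|f(z)-\mathbf{1}\{z\le 0\}\bigr|\,dz,
\end{align}
and the last integral is an absolute constant (equal to $2\ln 2$ by the symmetry $f(z)+f(-z)=1$), yielding again $O(1/\sigma)$. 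Combining the two bounds proves the claim.

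The most delicate step in this plan is the integration-by-parts with the lattice-valued CDF $F$, which must be carried out as an Abel summation (or equivalently as a Riemann--Stieltjes integral) and requires a small check that the endpoint contributions vanish; the other estimates reduce to one-line calculations, so I do not anticipate a genuine obstacle beyond bookkeeping.
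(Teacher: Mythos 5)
Your proof is correct, but it routes through a different intermediary than the paper does. The paper compares $\mathbb{E}\big[\frac{1}{1+e^{X-Y}}\big]$ to $F(0)=\mathbb{P}(X-Y\le 0)$ by a direct pmf computation, exploiting the fact that $\frac{1}{1+e^{i}}$ differs from $\mathbf{1}\{i\le 0\}$ by terms that decay like $e^{-|i|}$, so the discrepancy is controlled by $p_{\max}\sum_{i\ge 0}e^{-i}$; it then needs a separate Berry--Esseen-plus-mean-value argument to show $p_{\max}=O(1/\sigma)$, and a second application of Berry--Esseen at $\gamma=0$ to relate $F(0)$ to $\Phi(-\nu/\sigma)$. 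You instead insert the Gaussian expectation $\mathbb{E}[f(\tilde Z)]$ as the intermediary: the Kolmogorov distance $\|F-G\|_\infty=O(1/\sigma)$ from Lemma~\ref{lemm:berry} transfers to the expectation of $f$ via Riemann--Stieltjes integration by parts because $f$ has total variation $\int|f'|=1$, and the remaining Gaussian-versus-indicator error is bounded by the sup of the normal density times $\int|f-\mathbf{1}\{\cdot\le 0\}|=2\ln 2$. Your version avoids the $p_{\max}$ estimate entirely and yields a cleaner explicit constant; the integration-by-parts step you flag as delicate does go through, since $f$ is continuous and of bounded variation, $X-Y$ is supported on a bounded lattice so $F-G$ vanishes at $\pm\infty$, and the boundary terms $f(\pm A)\,(F(\pm A)-G(\pm A))$ tend to zero. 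The paper's argument is more elementary (no Stieltjes machinery) at the cost of one extra lemma-internal step. Both are valid proofs of the stated $O(1/\sigma)$ bound.
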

\begin{proof}
Let $p(\gamma)$ and $F(\gamma)$ denote the pmf and CDF of the random variable $X-Y$, respectively. Then,
\begin{align}\label{eq:E-F}
\!\!\!\!\Big|F(0)\!-\!\mathbb{E}[\frac{1}{1\!+\!e^{X-Y}}]\Big|&=\Big|\mathbb{P}(X\!-\!Y\leq 0)\!-\!\mathbb{E}[\frac{1}{1\!+\!e^{X-Y}}]\Big|\cr 
&=\Big|\sum_{i=-\infty}^{0}p(i)-\sum_{i=-\infty}^{\infty}\frac{p(i)}{1+e^i}\Big|\cr 
&=\Big|\sum_{i=0}^{\infty}\frac{p(-i)}{1+e^{i}}-\sum_{i=1}^{\infty}\frac{p(i)}{1+e^i}\Big|\cr 
&\leq \sum_{i=0}^{\infty}\frac{p(-i)}{1+e^i}+\sum_{i=0}^{\infty}\frac{p(i)}{1+e^i}\cr 
&\leq  \sum_{i=0}^{\infty}p(-i)e^{-i}+\sum_{i=0}^{\infty}p(i)e^{-i}\cr 
&\leq 2p_{\max}\sum_{i=0}^{\infty}e^{-i}=\frac{2e}{e-1}p_{\max},
\end{align}
where $p_{\max}=\max_{i}\{p(i)\}$. Next, using Berry-Esseen Theorem (Lemma \ref{lemm:berry}), and noting that $X$ and $Y$ can be written as sum of $n$ and $m$ independent Bernoulli random variables $Ber(\mu)$ and $Ber(1-\mu)$, respectively, we get
\begin{align}\label{eq:berry}
\sup_{\gamma}\Big|F(\gamma)-\Phi\Big(\frac{\gamma-\nu}{\sigma}\Big)\Big|\leq \frac{c_0(\mu^2+(1-\mu)^2)}{\sigma}. 
\end{align}
Now for every $i$ we can write,
\begin{align}\label{eq:p_max}
p(i)&=F(i)-F(i-1)\cr 
&\leq \Phi(\frac{i-\nu}{\sigma})-\Phi(\frac{i-1-\nu}{\sigma})+\frac{2c_0(\mu^2+(1-\mu)^2)}{\sigma}\cr 
&\leq \Phi'(\eta)\times\left(\frac{i-\nu}{\sigma}-\frac{i-1-\nu}{\sigma}\right)+\frac{2c_0(\mu^2+(1-\mu)^2)}{\sigma}\cr 
&=\frac{1}{\sqrt{2\pi}}e^{-\frac{\eta^2}{2}}\times \frac{1}{\sigma}+\frac{2c_0(\mu^2+(1-\mu)^2)}{\sigma}\cr 
&\leq \frac{1}{\sqrt{2\pi}\sigma}+\frac{2c_0(\mu^2+(1-\mu)^2)}{\sigma}=\frac{c_1}{\sigma}
\end{align}
where $c_1:=\frac{1}{\sqrt{2\pi}}+2c_0(\mu^2+(1-\mu)^2)$. The first inequality is due to \eqref{eq:berry} and the second inequality is by Mean-value Theorem for some $\eta\in [\frac{i-1-\nu}{\sigma}, \frac{i-\nu}{\sigma}]$. As a result, $p_{\max}\leq \frac{c_1}{\sigma}$. Substituting \eqref{eq:p_max} into \eqref{eq:E-F}, we have
\begin{align}\label{eq:F0}
\Big|F(0)-\mathbb{E}[\frac{1}{1+e^{X-Y}}]\Big|\leq \frac{2ec_1}{(e-1)\sigma}.
\end{align}
Finally, adding \eqref{eq:F0} with \eqref{eq:berry} when $\gamma=0$, and using the triangle inequality, we get
\begin{align}\nonumber
\Big|\mathbb{E}[\frac{1}{1\!+\!e^{X-Y}}]\!-\!\Phi(-\frac{\nu}{\sigma})\Big|\!\leq\! \frac{2ec_1}{(e\!-\!1)\sigma}\!+\!\frac{c_0(\mu^2\!+\!(1\!-\!\mu)^2)}{\sigma}\!=\!\frac{c}{\sigma}, 
\end{align}where $c:=\frac{2ec_1}{e-1}+c_0(\mu^2+(1-\mu)^2)$ is a positive constant.
\end{proof}

\begin{lemma}\label{lemm:main}
Let $X_{\ell}\sim Bin(N_{\ell},\mu), Y_{\ell}\sim Bin(M_{\ell},1-\mu)$, $\ell\in[k]$, be mutually independent (i.e., for every $i\neq j$, $X_i$ and $Y_j$ are independent). Moreover, assume $N_{\ell}=\sum_{i=1}^{\ell}n_{i}$ and $M_{\ell}=\sum_{i=1}^{\ell}m_{i}$, where $n_i, m_i\in\mathbb{Z}^+$, and $N_{k}+M_{k}=N$. Then $B:=\sum_{\ell=1}^{k}\mathbb{E}\Big[\frac{1}{1+e^{X_{\ell}-Y_{\ell}}}-\frac{1}{1+e^{X_{\ell}-Y_{\ell-1}}}\Big]=O(\sqrt{N\ln N})$.
\end{lemma}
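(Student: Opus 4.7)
The plan is to combine a Berry--Esseen substitution with a Lipschitz-plus-telescoping bound on the resulting Gaussian CDF sum.

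First, I would apply Lemma \ref{lemm:berry-essen} to both $X_\ell - Y_\ell$ and $X_\ell - Y_{\ell-1}$ in each summand of $B$, replacing the corresponding expectation by $\Phi(-\nu_{2\ell}/\sigma_{2\ell})$ (resp.\ $\Phi(-\nu_{2\ell-1}/\sigma_{2\ell-1})$) at an additive cost of $c/\sigma_{2\ell}$ (resp.\ $c/\sigma_{2\ell-1}$), where $\nu_{2\ell}=N_\ell\mu-M_\ell(1-\mu)$, $\sigma_{2\ell}^2=\mu(1-\mu)(N_\ell+M_\ell)$ and similarly with $M_{\ell-1}$ for the other index. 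Since $n_i,m_i\ge 1$, the variance $\sigma_{2\ell}^2$ is proportional to $N_\ell+M_\ell$, and the elementary inequality $1\le n_\ell+m_\ell$ lets me invoke Lemma \ref{lemm:log-bound} (applied with block sizes $n_\ell+m_\ell$ and partial sums $N_\ell+M_\ell$) to conclude that the aggregate Berry--Esseen error is $O(\sqrt{N\ln N})$.

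It then remains to bound $B_\Phi := \sum_{\ell=1}^k \big[\Phi(-\nu_{2\ell}/\sigma_{2\ell}) - \Phi(-\nu_{2\ell-1}/\sigma_{2\ell-1})\big]$. For fixed $\ell$, the two CDF arguments share the same $N_\ell$ and differ only in replacing $M_{\ell-1}$ by $M_\ell$, so the relevant one-variable function is $f(v) := (N_\ell\mu - v(1-\mu))/\sqrt{\mu(1-\mu)(N_\ell+v)}$. A direct differentiation gives $|f'(v)|\le C/\sqrt{N_\ell+v}$ for a constant $C$ depending only on $\mu$. Combining this with the Lipschitz estimate $\|\Phi'\|_\infty\le 1/\sqrt{2\pi}$ and integrating over $v\in[M_{\ell-1},M_\ell]$ yields
\[
|\phi_{2\ell}-\phi_{2\ell-1}| \;\le\; \frac{2C}{\sqrt{2\pi}}\Big(\sqrt{N_\ell+M_\ell}-\sqrt{N_\ell+M_{\ell-1}}\Big).
\]
Inserting the non-negative ``missing'' increments $\sqrt{N_\ell+M_{\ell-1}}-\sqrt{N_{\ell-1}+M_{\ell-1}}$ between consecutive $\ell$ turns the resulting upper bound into a perfect telescoping sum that collapses to $\sqrt{N_k+M_k}=\sqrt{N}$. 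Hence $|B_\Phi|=O(\sqrt{N})$, and combining both contributions gives $|B|=O(\sqrt{N\ln N})$.

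The main obstacle is the second step: the trivial estimate $|\phi_{2\ell}-\phi_{2\ell-1}|\le 1$ yields only $|B_\Phi|\le k = O(N)$, which is far too weak. It is essential that $|f'(v)|$ decays like $1/\sqrt{N_\ell+v}$, because integrating this exact rate produces a \emph{difference of square roots} that fits into a telescoping sum; any pointwise estimate on $|f'|$ or a slower decay rate would destroy the telescoping and yield a strictly worse bound.
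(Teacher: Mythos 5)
Your argument is correct and reaches the stated bound, but it handles the Gaussian-CDF sum by a genuinely different mechanism than the paper. Both proofs begin identically: apply Lemma \ref{lemm:berry-essen} to each of $X_\ell-Y_\ell$ and $X_\ell-Y_{\ell-1}$, paying $c(1/\sigma_{2\ell}+1/\sigma_{2\ell-1})$ per block, and then bound the resulting sum of normal-CDF differences using the Lipschitz constant $1/\sqrt{2\pi}$ of $\Phi$. The divergence is in the final reduction. The paper expands $\frac{\mu_{2\ell-1}}{\sigma_{2\ell-1}}-\frac{\mu_{2\ell}}{\sigma_{2\ell}}$ directly, drops terms to arrive at $\beta\sum_\ell m_\ell/\sqrt{M_\ell}$, and invokes Lemma \ref{lemm:log-bound} (the Cauchy--Schwarz/harmonic-sum estimate), which yields $O(\sqrt{N\ln N})$ for this piece; it handles the Berry--Esseen error separately via $\sigma_{2\ell}\ge\sqrt{2\mu(1-\mu)\ell}$ and an integral comparison, giving $O(\sqrt N)$. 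You instead view the two arguments as $-f(M_{\ell-1})$ and $-f(M_\ell)$ for the one-variable function $f(v)=(N_\ell\mu-v(1-\mu))/\sqrt{\mu(1-\mu)(N_\ell+v)}$, verify $|f'(v)|\le C/\sqrt{N_\ell+v}$ (this checks out: the numerator of $f'$ is bounded by $N_\ell+v$ up to the factor $1/\sqrt{\mu(1-\mu)}$), integrate to get differences of square roots, and telescope to $O(\sqrt N)$ — a sharper bound for this term, and one that bounds $|B_\Phi|$ rather than just an upper bound on $B_\Phi$, which also sidesteps any sign bookkeeping in the mean-value step. Your Berry--Esseen error term, bounded via $1\le n_\ell+m_\ell$ and Lemma \ref{lemm:log-bound}, comes out as $O(\sqrt{N\ln N})$, so the two proofs end up dominated by different pieces but land on the same final bound. (Amusingly, combining your telescoping bound for $B_\Phi$ with the paper's $\sum_\ell 1/\sqrt\ell$ treatment of the error term would improve the lemma to $O(\sqrt N)$.)
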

\begin{proof}
For any $\ell=1,\ldots,k$, define
\begin{align}\label{eq:mean-variance}
&\mu_{2\ell}\!:=N_{\ell}\mu\!-\!M_{\ell}(1\!-\!\mu), \ \ \ \ \ \ \ \ \ \  \sigma^2_{2\ell}:=\mu(1\!-\!\mu)(N_{\ell}\!+\!M_{\ell}),\cr 
&\mu_{2\ell\!-\!1}\!:=N_{\ell}\mu\!-\!M_{\ell\!-\!1}(1\!-\!\mu), \ \ \ \  \sigma^2_{2\ell\!-\!1}\!:=\mu(1\!-\!\mu)(N_{\ell}\!+\!M_{\ell\!-\!1}), 
\end{align}where $M_{\ell}=\sum_{i=1}^{\ell}m_{i}$ and $N_{\ell}=\sum_{i=1}^{\ell}n_{i}$ (recall that $m_i$ and $n_i$ denote, respectively, the lengths of the $i$th true and false blocks in an offline policy $\Psi$). Using Lemma \ref{lemm:berry-essen} we can write 
\begin{align}\label{eq:integral-apx}
B&\stackrel{\text{(a)}}{\leq}2\sum_{\ell=1}^{k}\left[\Phi(-\frac{\mu_{2\ell}}{\sigma_{2\ell}})-\Phi(-\frac{\mu_{2\ell-1}}{\sigma_{2\ell-1}})+c\left(\frac{1}{\sigma_{2\ell}}+\frac{1}{\sigma_{2\ell-1}}\right)\right]\cr  
&\stackrel{\text{(b)}}{\leq}\! 2\!\sum_{\ell=1}^{k}\!\Big[\Phi(-\frac{\mu_{2\ell}}{\sigma_{2\ell}})-\Phi(-\frac{\mu_{2\ell-1}}{\sigma_{2\ell-1}})\Big]\cr 
&\qquad+\frac{2c}{\sqrt{\mu(1-\mu)}}\sum_{\ell=1}^{k}\left(\!\frac{1}{\sqrt{2\ell}}+\frac{1}{\sqrt{2\ell-1}}\right)\cr
&\leq 2\sum_{\ell=1}^{k}\Big[\Phi(-\frac{\mu_{2\ell}}{\sigma_{2\ell}})-\Phi(-\frac{\mu_{2\ell-1}}{\sigma_{2\ell-1}})\Big]+\frac{2c}{\sqrt{\mu(1\!-\!\mu)}}\int_{0}^{2k}\!\!\!\frac{1}{\sqrt{x}}dx\cr  
&\stackrel{\text{(c)}}{\leq} 2\sum_{\ell=1}^{k}\Big[\Phi(-\frac{\mu_{2\ell}}{\sigma_{2\ell}})-\Phi(-\frac{\mu_{2\ell-1}}{\sigma_{2\ell-1}})\Big]+4c\sqrt{\frac{N}{\mu(1-\mu)}},
\end{align}
where $(a)$ is due to Lemma \ref{lemm:berry-essen}, and $(b)$ holds because $N_{\ell}+M_{\ell}=\sum_{i=1}^{\ell}(n_i+m_i)$ is the sum of $2\ell$ positive integers, and thus $\sigma_{2\ell}\ge \sqrt{2\mu(1-\mu)\ell}$ (similarly $\sigma_{2\ell-1}\ge \sqrt{\mu(1-\mu)(2\ell-1)}$). Finally $(c)$ holds because $2k\leq N$.  

We proceed by showing a sub-linear upper bound on $\sum_{\ell=1}^{k}\big[\Phi(-\frac{\mu_{2\ell}}{\sigma_{2\ell}})-\Phi(-\frac{\mu_{2\ell-1}}{\sigma_{2\ell-1}})\big]$. Let $\beta$ be a constant defined by $\beta:=\frac{1}{\sqrt{2\pi\mu(1-\mu)}}$. Using the Mean-Value Theorem and since $\Phi'(x)=\frac{1}{\sqrt{2\pi}}e^{-\frac{x^2}{2}}\leq \frac{1}{\sqrt{2\pi}}, \forall x$, we can write,
\begin{align}\nonumber
&\sum_{\ell=1}^{k}\Big[\Phi(-\frac{\mu_{2\ell}}{\sigma_{2\ell}})-\Phi(-\frac{\mu_{2\ell-1}}{\sigma_{2\ell-1}})\Big]\leq \sum_{\ell=1}^{k}\frac{1}{\sqrt{2\pi}}\left(\frac{\mu_{2\ell-1}}{\sigma_{2\ell-1}}-\frac{\mu_{2\ell}}{\sigma_{2\ell}}\right)\cr
&\qquad=\beta\sum_{\ell=1}^{k}\left(\frac{N_{\ell}\mu-M_{\ell-1}(1-\mu)}{\sqrt{N_{\ell}+M_{\ell-1}}}-\frac{N_{\ell}\mu-M_{\ell}(1-\mu)}{\sqrt{N_{\ell}+M_{\ell}}}\right)\cr
&\qquad\leq \beta\sum_{\ell=1}^{k}\left(\frac{-M_{\ell-1}}{\sqrt{N_{\ell}+M_{\ell-1}}}+\frac{M_{\ell}}{\sqrt{N_{\ell}+M_{\ell}}}\right)\cr 
&\qquad\leq  \beta\sum_{\ell=1}^{k}\frac{M_{\ell}-M_{\ell-1}}{\sqrt{N_{\ell}+M_{\ell}}}=\beta\sum_{\ell=1}^{k}\frac{m_{\ell}}{\sqrt{N_{\ell}+M_{\ell}}}\cr 
&\qquad\leq \beta\sum_{\ell=1}^{k}\frac{m_{\ell}}{\sqrt{M_{\ell}}}.
\end{align}
Finally, using Lemma \ref{lemm:log-bound} and noting that $\sum_{\ell=1}^{k}m_{\ell}\leq N$, we obtain 
\begin{align}\label{eq:bonus-log-bound}
\sum_{\ell=1}^{k}\Big[\Phi(-\frac{\mu_{2\ell}}{\sigma_{2\ell}})-\Phi(-\frac{\mu_{2\ell-1}}{\sigma_{2\ell-1}})\Big]=O(\sqrt{N\ln N}).
\end{align}
This together with \eqref{eq:integral-apx} completes the proof.
\end{proof}

\ifCLASSOPTIONcaptionsoff
  \newpage
\fi

\end{document}